\newtheorem{theorem}{Theorem}[section]
\newtheorem{lemma}[theorem]{Lemma}
\newcommand{\figureFolder}{figures}
\begin{document}

\title{Lyapunov-stable neural-network control}

\author{\authorblockN{Hongkai Dai \authorrefmark{1},
Benoit Landry\authorrefmark{2},
Lujie Yang\authorrefmark{3}, 
Marco Pavone\authorrefmark{2} and
Russ Tedrake\authorrefmark{1}\authorrefmark{3}}
\authorblockA{\authorrefmark{1} Toyota Research Institute}
\authorblockA{\authorrefmark{2}Stanford University}
\authorblockA{\authorrefmark{3}Massachusetts Institute of Technology\\ Email: hongkai.dai@tri.global,
blandry@stanford.edu
lujie@mit.edu
pavone@stanford.edu
russ.tedrake@tri.global}
}

\maketitle

\begin{abstract}
	Deep learning has had a far reaching impact in robotics. Specifically, deep reinforcement learning algorithms have been highly effective in synthesizing neural-network controllers for a wide range of tasks. However, despite this empirical success, these controllers still lack theoretical guarantees on their performance, such as Lyapunov stability  (i.e., all trajectories of the closed-loop system are guaranteed to converge to a goal state under the control policy). This is in stark contrast to traditional model-based controller design, where principled approaches (like LQR) can synthesize stable controllers with provable guarantees. To address this gap, we propose a generic method to synthesize a Lyapunov-stable neural-network controller, together with a neural-network Lyapunov function to simultaneously certify its stability. Our approach formulates the Lyapunov condition verification as a mixed-integer linear program (MIP). Our MIP verifier either certifies the Lyapunov condition, or generates counter examples that can help improve the candidate controller and the Lyapunov function. We also present an optimization program to compute an inner approximation of the  region of attraction for the closed-loop system. We apply our approach to robots including an inverted pendulum, a 2D and a 3D quadrotor, and showcase that our neural-network controller outperforms a baseline LQR controller. The code is open sourced at \url{https://github.com/StanfordASL/neural-network-lyapunov}.
\end{abstract}

\IEEEpeerreviewmaketitle

\section{Introduction}
The last few years have seen sweeping popularity of applying neural networks to a wide range of robotics problems \cite{sunderhauf2018limits}, such as perception \cite{kehl2017ssd, park2019deepsdf, florence2018dense}, reasoning \cite{fazeli2019see} and planning \cite{ichter2018learning}. In particular, researchers have had great success training control policies with neural networks on different robot platforms \cite{lee2020learning, tan2018sim, hwangbo2017control, kalashnikov2018qt}. Typically these control policies are obtained through reinforcement learning (RL) algorithms \cite{sutton2018reinforcement, schulman2015trust, haarnoja2018soft}. Although immensely successful, these neural-network controllers still generally lack theoretical guarantees on their performance, which could hinder their adoption in many safety-critical applications.

A crucial guarantee currently missing for neural-network  controllers is the stability of the closed-loop system, especially Lyapunov stability. A system is regionally stable in the sense of Lyapunov if starting from any states within a region, the system eventually converges to an equilibrium. This region is called the region of attraction (ROA) \cite{slotine1991applied}. Lyapunov stability provides a strong guarantee on the asymptotic behavior of the system for any state within the region of attraction. It is well known that a system is Lyapunov stable if and only if there exists a \textit{Lyapunov function} \cite{slotine1991applied} that is strictly positive definite and strictly decreasing everywhere except at the goal equilibrium state. Therefore, our goal is to synthesize a pair: a neural-network controller to stabilize the system, and a Lyapunov function to certify its stability.

\begin{figure}
    \centering
    \begin{subfigure}{0.26\textwidth}
    \includegraphics[width=0.98\textwidth]{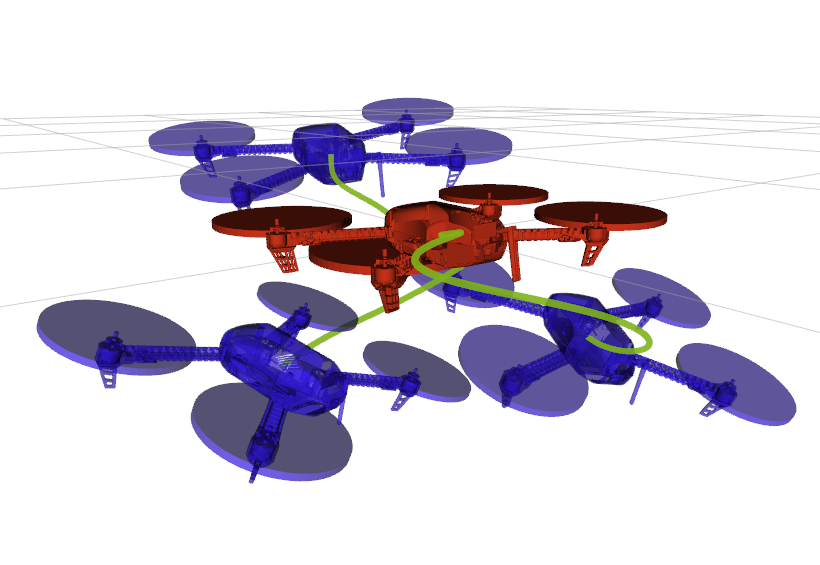}
    \end{subfigure}
    \begin{subfigure}{0.22\textwidth}
    \includegraphics[width=0.98\textwidth]{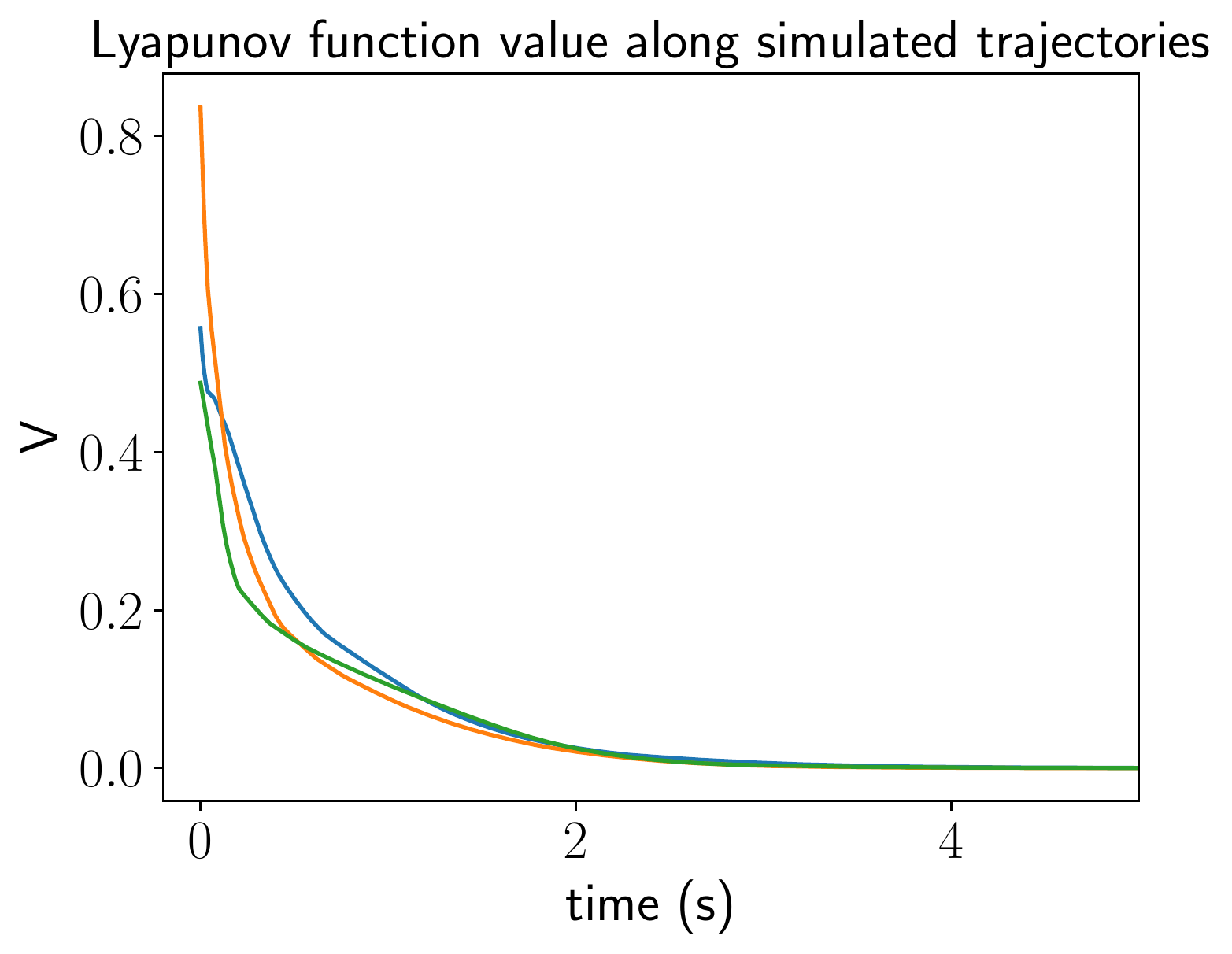}
    \end{subfigure}
    
    \caption{(left) Snapshots of stabilizing a 3D quadrotor with our neural-network controller to the hovering position at the origin (red snapshot) from different initial states. The green curves are the paths of the quadrotor center. (right) value of the neural-network Lyapunov functions along the simulated trajectories. The Lyapunov function has positive values, and decreases along the trajectories.}
    \label{fig:quadrotor3d_snapshot6}
\end{figure}
In the absence of neural networks in the loop, a significant body of work from the control community provides tools to synthesize Lyapunov-stable controllers \cite{slotine1991applied, boyd1994linear}. For example, for a linear dynamical system, one can synthesize a linear LQR controller to achieve Lyapunov stability (with the quadratic Lyapunov function solved through the Riccati equation). For a control-affine system with polynomial dynamics, Javis-Wloszek et al. \cite{jarvis2003some} and Majumdar et al. \cite{majumdar2013control} have demonstrated that a Lyapunov-stable controller together with a Lyapunov function, both polynomial functions of the state, can be obtained by solving a sum-of-squares (SOS) program. Recently, for more complicated systems, researchers have started to represent Lyapunov functions (but not their associated controllers) using neural networks. For example, Chang et al. synthesized linear controllers and neural network Lyapunov functions for simple nonlinear systems \cite{chang2019neural}. In a similar spirit, there is growing interest to approximate the system dynamics with neural networks, such as for racing cars \cite{williams2017information}, actuators with friction/stiction \cite{hwangbo2019learning}, perceptual measurement like keypoints \cite{manuelli2020keypoints}, system with contacts \cite{ pfrommer2020contactnets}, and soft robots \cite{gillespie2018learning},  where an accurate Lagrangian dynamics model is hard to obtain, while the neural-network dynamics model can be extracted from rich measurement data. Hence we are interested in systems whose dynamics are given as a neural network.

Unlike previous work which is restricted to linear \cite{boyd1994linear, chang2019neural} or polynomial controllers \cite{jarvis2003some, majumdar2013control}, our paper provides a novel approach to synthesize a stable neural-network controller, together with a neural-network Lyapunov function, for a given dynamical system whose forward dynamics is approximated by another neural network. The overall picture together with a Lyapunov function is visualized in Fig. \ref{fig:feedback_diagram}.
\begin{figure}
\begin{subfigure}{0.3\textwidth}
		\includegraphics[width=0.98\textwidth]{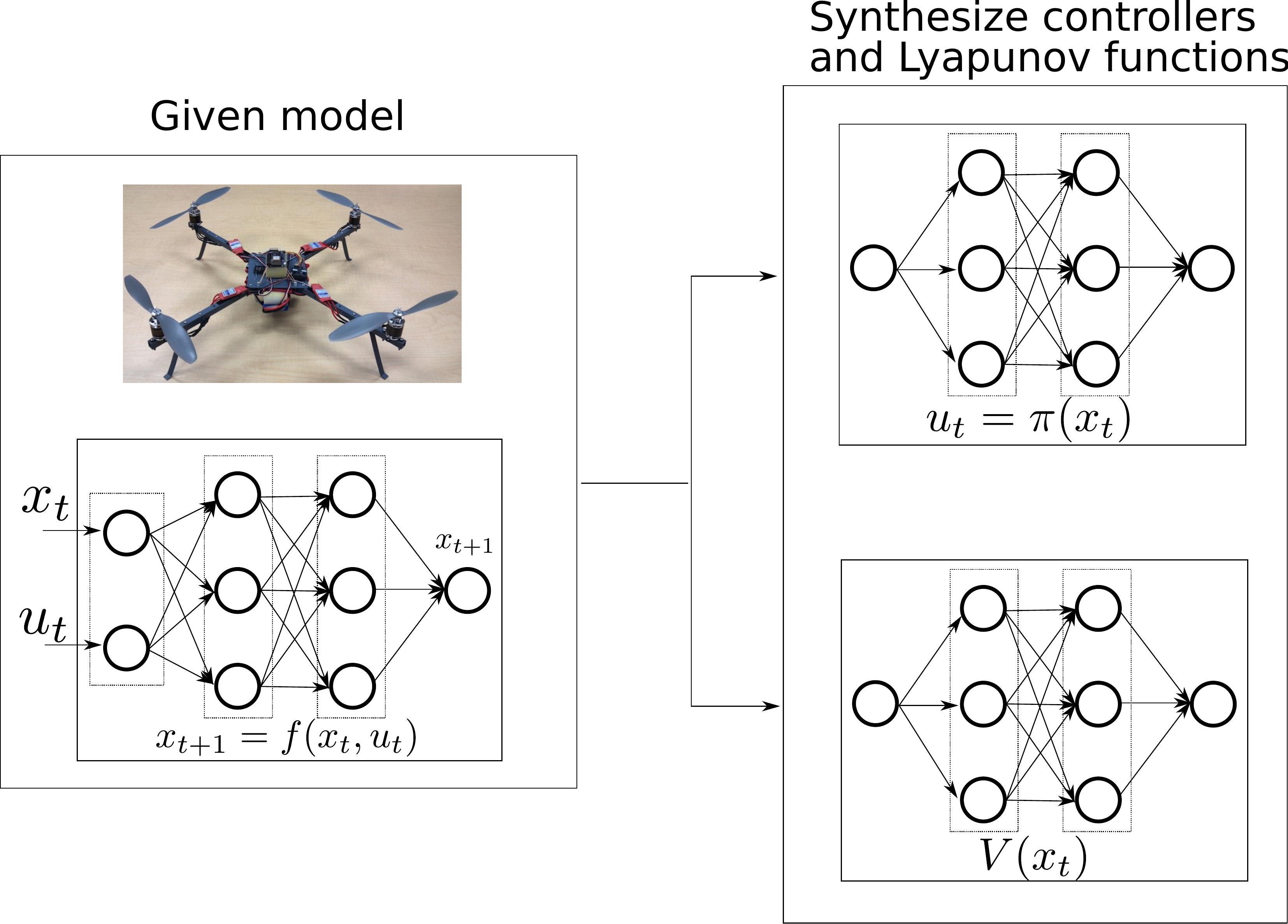}
		\subcaption{}
		\end{subfigure}
		\begin{subfigure}{0.17\textwidth}
		\includegraphics[width=0.98\textwidth]{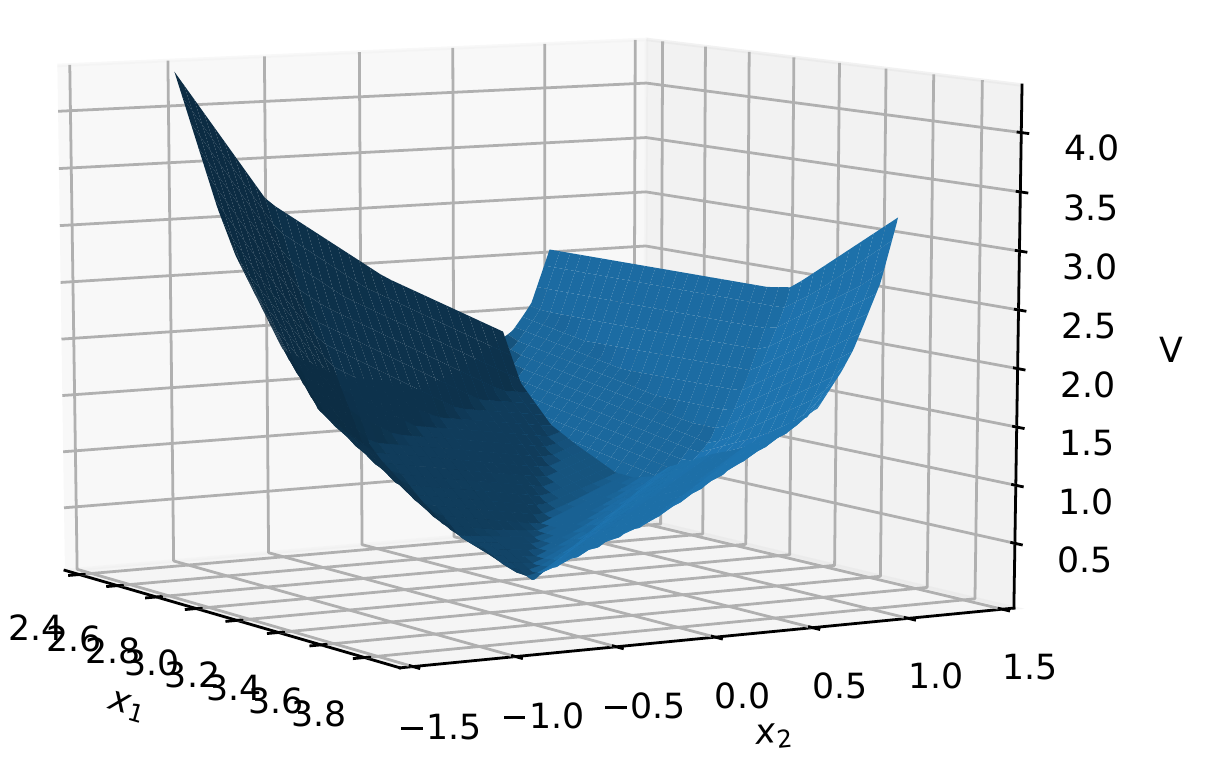}
		\subcaption{}
		\end{subfigure}
		\caption{(Left) The forward system\protect\footnotemark (which contains a neural network) is given, and we aim at finding the controller and a Lyapunov function to prove Lyapunov stability of the closed-loop system. Both the controller and Lyapunov function contain neural networks. (right) Visualization of a Lyapunov function for a 2-dimensional system. The Lyapunov function is usually a bowl-shaped function that is strictly positive except at the goal state.}
	\label{fig:feedback_diagram}
\end{figure}
\footnotetext{The quadrotor picture is taken from \cite{bonna2015trajectory}.}

In order to synthesize neural-network controllers and Lyapunov functions, one has to first be able to verify that the neural-network functions satisfy the Lyapunov condition for \textit{all} states within a region. There are several techniques to verify certain properties of neural network outputs for all inputs within a range. These techniques can be categorized by whether the verification is exact, e.g., using Satisfiability Modulo Theories (SMT) solvers \cite{katz2017reluplex, chang2019neural, abate2020formal} or mixed-integer programs (MIP) solvers \cite{bunel2018unified, tjeng2018evaluating, dai2020counter, chen2020learning}, versus inexact verification by solving a relaxed convex problem \cite{bastani2016measuring, fazlyab2020safety, wong2018provable, Shen2020}. Another important distinction among these techniques is the activation functions used in the neural networks. For example, Abate et al. \cite{abate2020formal} and Chang et al. \cite{chang2019neural} learn neural-network Lyapunov functions with quadratic and \textit{tanh} activation functions respectively. On the other hand, the piecewise linear nature of (leaky) ReLU activation implies that  the input and output of a (leaky) ReLU network satisfy mixed-integer linear constraints, and hence network properties can be exactly verified by MIP solvers \cite{tjeng2018evaluating, dai2020counter}. In this work, due to its widespread use, we choose the (leaky) ReLU unit for all neural networks. This enables us to perform exact verification of the Lyapunov condition without relaxation for safety-critical robot missions.

The verifiers (both SMT and MIP solvers) can either definitively certify that a given candidate function satisfies the Lyapunov condition everywhere in the region, or generate counter examples violating the Lyapunov condition. In this work, we solve MIPs to find the most adversarial counter examples, namely the states with the maximal violation of the Lyapunov condition. Then, in order to improve the satisfaction of the Lyapunov conditions, we propose two approaches to jointly train the controller and the Lyapunov function. The first approach is a standard procedure in counter-example guided training, where we add the counter examples to the training set and minimize a surrogate loss function of the Lyapunov condition violation on this training set \cite{abate2020formal, chang2019neural, ravanbakhsh2019learning}. The second approach is inspired by the bi-level optimization community \cite{bard2013practical, landry2019differentiable, dai2020counter}, where we directly minimize the maximal violation as a min-max problem through gradient descent. 

Our contributions include: 1) we synthesize a Lyapunov-stable neural-network controller together with a neural-network Lyapunov function. To the best of our knowledge, this is the first work capable of doing this. 2) We compute an inner approximation of the  region of attraction for the closed-loop system. 3) We present two approaches to improve the networks based on the counter examples found by the MIP verifier. 4) We demonstrate that our approach can successfully synthesize Lyapunov-stable neural-network controllers for systems including inverted pendulums, 2D and 3D quadrotors, and that they outperform a baseline LQR controller.

\section{Problem statement}
\label{sec:problem_definition}
We consider a discrete-time system whose forward dynamics is
\begin{subequations}
\begin{align}
	x_{t+1} = f(x_t, u_t) = \phi_{\text{dyn}}(x_t, u_t) - \phi_{\text{dyn}}(x^*, u^*) + x^*\label{eq:forward_dynamics}\\
	u_{\text{min}}\le u_t \le u_{\text{max}}\label{eq:input_limits}
\end{align}
\end{subequations}
where $x_t\in\mathbb{R}^{n_x}, u_t\in\mathbb{R}^{n_u}$, $u_{\text{min}}$ and $u_{\text{max}}$ are the lower/upper input limits. $\phi_{\text{dyn}}$ is a feedforward fully connected neural network with leaky ReLU activation functions \footnote{Since ReLU can be regarded as a special case of leaky ReLU, we present our work with leaky ReLU for generality.} \footnote{Our approach can also handle other architectures such as convolution. For simplicity of presentation we don't discuss them in this paper.}. $x^*/u^*$ are the state/control at the goal equilibrium. By definition the dynamics equation \eqref{eq:forward_dynamics} guarantees that at the equilibrium state/control $x_t = x^*, u_t = u^*$, the next state $x_{t+1}$ remains the equilibrium state $x_{t+1}=x^*$. Due to the universal approximation theorem \cite{leshno1993multilayer}, we can approximate an arbitrary smooth dynamical system written as \eqref{eq:forward_dynamics} with a neural network. Our goal is to find a control policy $u_t = \pi(x_t)$ and a Lyapunov function $V(x_t):\mathbb{R}^{n_x}\rightarrow \mathbb{R}$, such that the following Lyapunov conditions are satisfied:
\begin{subequations}
\begin{align}
	V(x_t) > 0 \;\forall x_t\in\mathcal{S}, x_t\ne x^*\label{eq:lyapunov_positivity}\\
	V(x_{t+1}) - V(x_t) \le -\epsilon_2 V(x_t)\;\forall x_t\in\mathcal{S}, x_t\ne x^*\label{eq:lyapunov_derivative}\\
	V(x^*) = 0\label{eq:lyapunov_equlibrium}
\end{align}
\label{eq:lyapunov_condition}%
\end{subequations}
where $\mathcal{S}$ is a compact sub-level set $\mathcal{S} = \{x_t | V(x_t)\le \rho\}$, and $\epsilon_2 > 0$ is a given positive scalar. The Lyapunov conditions in \eqref{eq:lyapunov_condition} guarantee that starting from any state inside $\mathcal{S}$, the state converges exponentially to the equilibrium state $x^*$, and $\mathcal{S}$ is a region of attraction of the closed-loop system. In addition to the control policy and the Lyapunov function, we will find an inner approximation of the region of attraction.  Note that condition \eqref{eq:lyapunov_derivative} is a constraint on the Lyapunov function $V(\cdot)$ as well as the control policy $\pi(\cdot)$, since $V(x_{t+1}) = V(f(x_t, \pi(x_t)))$ depends on both the control policy to compute $x_{t+1}$ together with the Lyapunov function $V(\cdot)$.

\section{Background on ReLU and MIP}
\label{sec:background}
In this section we give a brief overview of the mixed-integer linear formulation which encodes the input/output relationship of a neural network with leaky ReLU activation. This MIP formulation arises from the network output being a piecewise-affine function of the input, hence intuitively one can use linear constraints for each affine piece, and binary variables for the activated piece. Previously researchers have solved mixed-integer programs (MIP) to verify certain properties of the feedforward neural network in machine learning applications such as verifying image classifiers \cite{bunel2018unified, tjeng2018evaluating}. 
\begin{figure}
\centering
		\includegraphics[width=0.15\textwidth]{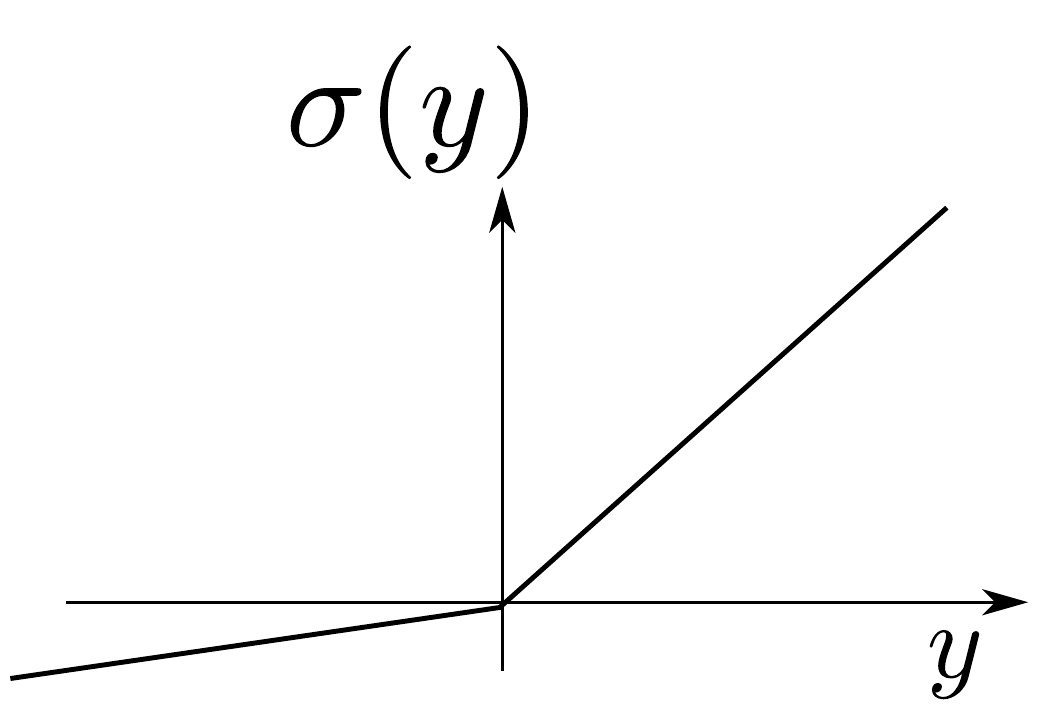}
		\caption{A leaky ReLU activation function.}
		\label{fig:leaky_relu}
\end{figure}

For a general fully-connected neural network, the input/output relationship in each layer is
\begin{subequations}
\begin{align}
	z_i =& \sigma(W_{i}z_{i-1}+b_{i}), i=1,\hdots,n-1\\
	z_n =& W_{n}z_{n-1}+b_{n},
	z_0 = x,
\end{align}
\label{eq:fully_connected_network}%
\end{subequations}
where $W_{i}, b_{i}$ are the weights/biases of the i'th layer. The activation function $\sigma(\cdot)$ is the leaky ReLU function shown in Fig.\ref{fig:leaky_relu}, as a piecewise linear function $\sigma(y) = \max(y, cy)$ where $0 \le c<1$ is a given scalar. If we suppose that for one leaky ReLU neuron, the input $y\in\mathbb{R}$ is bounded in the range $y_{\text{lo}} \le y \le y_{\text{up}}$ (where $y_{\text{lo}}< 0$ and $y_{\text{up}}>0$), then we can use the big-M technique to write out the input/output relationship of a leaky ReLU unit $w=\sigma(y)$ as the following mixed-integer linear constraints
\begin{subequations}
\begin{align}
	w\ge y,\quad
	w \ge cy\\
	w \le cy - (c-1)y_{\text{up}}\beta,\quad
	w \le y - (c-1)y_{\text{lo}}(\beta-1)\\
	\beta\in \{0, 1\},
\end{align}
\label{eq:leaky_relu_mix_integer}%
\end{subequations}
where the binary variable $\beta$ is active when $y\ge 0$. Since the only nonlinearity in the neural network \eqref{eq:fully_connected_network} is the leaky ReLU unit $\sigma(\cdot)$, by replacing it with constraints \eqref{eq:leaky_relu_mix_integer}, the relationship between the network output $z_n$ and input $x$ is fully captured by mixed-integer linear constraints.

We expect \textit{bounded} input to the neural networks since we care about states within a neighbourhood of the equilibrium so as to prove regional Lyapunov stability, and the system input $u_t$ is restricted within the input limits (Eq. \eqref{eq:input_limits}). With a bounded neural network input, the bound of each ReLU neuron input can be computed by either \textit{Interval Arithmetic} \cite{wong2018provable}, by solving a linear programming (LP) problem \cite{tjeng2018evaluating}, or by solving a mixed-integer linear programming (MILP) problem \cite{Cheng2017maximum, Fischetti2018}.

After formulating neural network verification as a mixed-integer program (MIP), we can efficiently solve MIPs to global optimality with off-the-shelf solvers, such as Gurobi \cite{gurobi} and CBC \cite{forrest2005cbc} via branch-and-cut method.

\section{Approach}
\label{sec:approach}
In this section we present our approach to finding a pair of neural networks as controller and the Lyapunov function. We will first use the technique described in the previous section \ref{sec:background}, and demonstrate that one can verify the Lyapunov condition \eqref{eq:lyapunov_condition} through solving MIPs. Then we will present two approaches to reduce the Lyapunov condition violation using the MIP results. Finally we explain how to compute an inner-approximation of the region of attraction.

\subsection{Verify Lyapunov condition via solving MIPs}
\label{subsec:lyapunov_mip}
We represent the Lyapunov function with a neural network $\phi_V:\mathbb{R}^{n_x}\rightarrow\mathbb{R}$ as
\begin{align}
	V(x_t) = \phi_V(x_t) - \phi_V(x^*) + |R(x_t - x^*)|_1, \label{eq:nn_lyapunov}
\end{align}
where $R$ is a matrix with full column rank. $|R(x_t - x^*)|_1$ is the 1-norm of the vector $R(x_t-x^*)$. Eq. \eqref{eq:nn_lyapunov} guarantees $V(x^*)=0$, hence condition \eqref{eq:lyapunov_equlibrium} is trivially satisfied. Notice that even without the term $|R(x_t - x^*)|_1$ in \eqref{eq:nn_lyapunov}, the Lyapunov function would still satisfy $V(x^*) = 0$, but adding this 1-norm term assists $V(\cdot)$ in satisfying the Lyapunov condition $V(x_t)>0$. As visualized in Fig \ref{fig:lyapunov_add_l1_3}, $\phi_V(x_t) - \phi_V(x^*)$ is a piecewise-affine function of $x_t$ passing through the point $(x^*, 0)$. Most likely $(x^*, 0)$ is in the interior of one of the linear pieces, instead of on the boundary of a piece; hence locally around $x^*$, the term $\phi_V(x_t) - \phi_V(x^*)$ is a linear function of $x_t$, which will become negative away from $x^*$, violating the positivity condition $V(x_t)>0$ (\eqref{eq:lyapunov_positivity}). To remedy this, we add the term $|R(x_t - x^*)|_1$ to the Lyapunov function. Due to $R$ being full-rank, this 1-norm is strictly positive everywhere except at $x^*$. With sufficiently large $R$, we guarantee that  at least locally around $x^*$ the Lyapunov function is positive. Notice that $V(x_t)$ is a piecewise-affine function of $x_t$.
\begin{figure}
	\centering
	\includegraphics[width=0.4\textwidth]{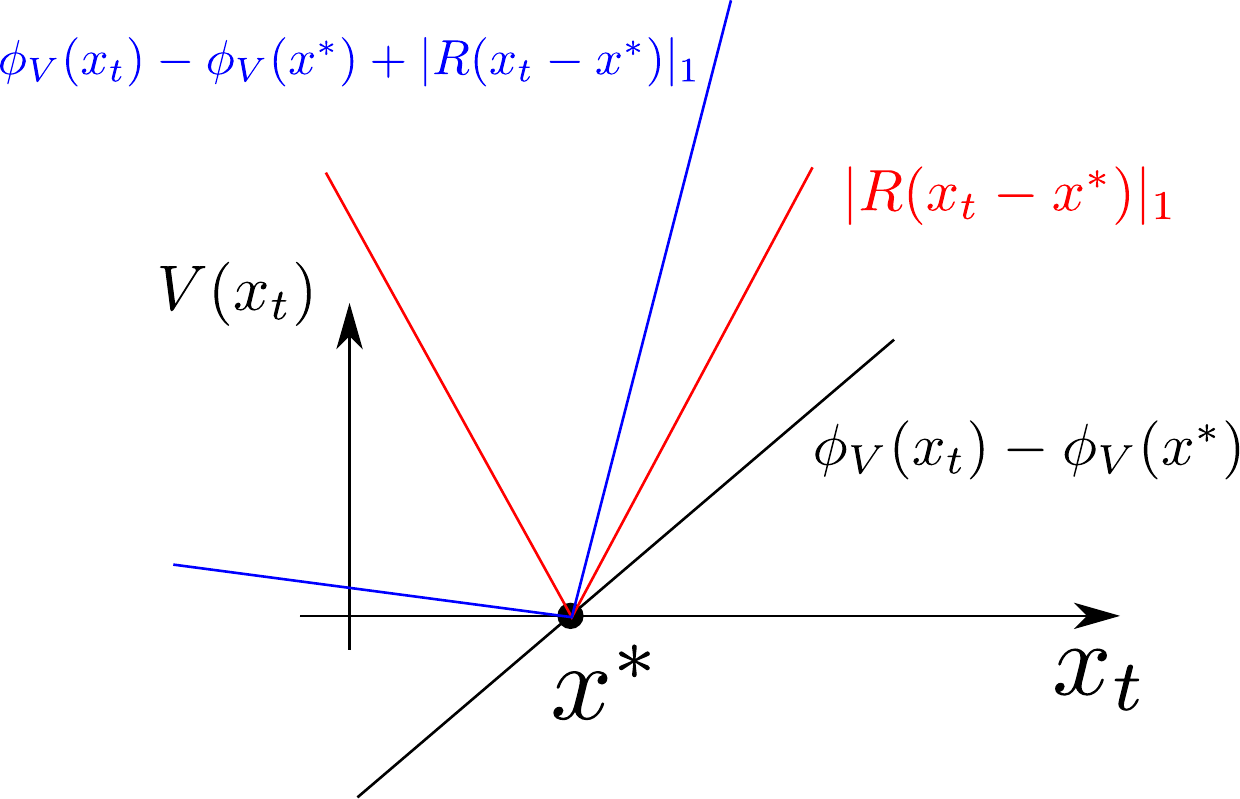}
	\caption{The term $\phi_V(x_t) - \phi_V(x^*)$ is a piecewise-affine function that passes through $(x_t, V(x_t)) = (x^*, 0)$. Most likely $x^*$ is within the interior of a linear piece, but not at the boundary between pieces. This linear piece will go negative in the neighbourhood of $x^*$. By adding the 1-norm $|R(x_t-x^*)|_1$ (red lines), the Lyapunov function (blue lines) is at least locally positive around $x^*$.}
	\label{fig:lyapunov_add_l1_3}
\end{figure}

Our approach will entail searching for both the neural network $\phi_V$ and the full column-rank matrix $R$ in \eqref{eq:nn_lyapunov}. To guarantee $R$ being full column-rank, we parameterize it as
\begin{align}
	R = U \left(\Sigma + \text{diag}(r_1^2, \hdots, r_{n_x}^2)\right)V^T, \label{eq:R_svd}
\end{align} where $U, V$ are given orthonormal matrices, $\Sigma$ is a given diagonal matrix with strictly positive diagonal entries, and scalars $r_1, \hdots, r_{n_x}$ are free variables. The parameterization \eqref{eq:R_svd} guarantees $R$ being full column-rank since the minimal singular value of $R$ is strictly positive.

We represent the control policy using a neural network $\phi_\pi:\mathbb{R}^{n_x}\rightarrow\mathbb{R}^{n_u}$ as
\begin{align}
	u_t = \pi(x_t) = \text{clamp}\left(\phi_\pi(x_t) - \phi_\pi(x^*) + u^*, u_{\text{min}}, u_{\text{max}}\right), \label{eq:control_network}
\end{align}
where $\text{clamp}(\cdot)$ clamps the value $\phi_\pi(x_t) - \phi_\pi(x^*) + u^*$ elementwisely within the input limits $[u_{\text{min}}, u_{\text{max}}]$, namely
\begin{align}
	\text{clamp}(\alpha, \text{lo}, \text{up}) = \begin{cases} \text{up} \;\text{if } \alpha > \text{up}\\ \alpha \;\text{if } \text{lo}\le \alpha \le \text{up}\\ \text{lo}\; \text{if }\alpha < \text{lo}\end{cases}. \label{eq:clamp_u}
\end{align}

The control policy \eqref{eq:control_network} is a piecewise-affine function of the state $x_t$. Notice that \eqref{eq:control_network} guarantees that at the equilibrium state $x_t = x^*$, the control action is $u_t = u^*$.

It is worth noting that our approach is only applicable to systems that can be stabilized by \textit{regular} (e.g., locally Lipschitz bounded) controllers. Some dynamical systems, for example a unicycle, require non-regular controllers for stabilization, where our approach would fail. The readers can refer to \cite{sontag1999stability} for more background on regular controllers.

The Lyapunov condition \eqref{eq:lyapunov_condition}, in particular, \eqref{eq:lyapunov_positivity}, is a strict inequality. To verify this through MIP which only handles non-strict inequalities constraints $\ge$ and $\le$, we change condition \eqref{eq:lyapunov_positivity} to the following condition with $\ge$
\begin{align}
	V(x_t) \ge \epsilon_1 |R(x_t-x^*)|_1\;\forall x\in\mathcal{S} \label{eq:lyapunov_positivity_l1},
\end{align}
where $0<\epsilon_1<1$ is a given positive scalar. Since $R$ is full column-rank, the right-hand side is 0 only when $x_t=x^*$. Hence the non-strict inequality constraint \eqref{eq:lyapunov_positivity_l1} is a sufficient condition for the strict inequality constraint \eqref{eq:lyapunov_positivity}. In Appendix \ref{subsec:l1_necessary} we prove that it is also a necessary condition.

In order to verify the Lyapunov condition \eqref{eq:lyapunov_condition} for a candidate Lyapunov function and a controller, we consider verifying the condition \eqref{eq:lyapunov_positivity_l1} and \eqref{eq:lyapunov_derivative} for a given bounded polytope $\mathcal{B}$ around the equilibrium state. The verifier solves the following optimization problems
\begin{subequations}
\begin{align}
	\max_{x_t\in\mathcal{B}} \epsilon_1|R(x_t-x^*)|_1 - V(x_t) \label{eq:lyapunov_positivity_mip}\\
	\max_{x_t\in\mathcal{B}} V(x_{t+1}) - V(x_t) + \epsilon_2 V(x_t),\label{eq:lyapunov_derivative_mip}
\end{align}
\label{eq:lyapunov_mip}%
\end{subequations}
where the objectives are the violation of condition \eqref{eq:lyapunov_positivity_l1} and \eqref{eq:lyapunov_derivative} respectively. If the optimal values of both problems are 0 (attained at $x_t=x^*$), then we certify the Lyapunov condition \eqref{eq:lyapunov_condition}. The objective in \eqref{eq:lyapunov_positivity_mip} is a piecewise-affine function of the variable $x_t$ since both $V(x_t)$ and $|R(x_t-x^*)|_1$ are piecewise-affine. Likewise in optimization problem \eqref{eq:lyapunov_derivative_mip}, since the control policy \eqref{eq:control_network} is a piecewise-affine functions of $x_t$, and the forward dynamics \eqref{eq:forward_dynamics} is a piecewise-affine function of $x_t$ and $u_t$, the next state $x_{t+1}=f(x_t, \pi(x_t))$, its Lyapunov value $V(x_{t+1})$ and eventually the objective function in \eqref{eq:lyapunov_derivative_mip} are all piecewise-affine functions of $x_t$. It is well known in the optimization community that one can maximize a piecewise-affine function within a bounded domain ($\mathcal{B}$ in this case) through solving an MIP \cite{vielma2010mixed}. In section \ref{sec:background} we have shown the MIP formulation on neural networks with leaky ReLU units; in Appendix \ref{subsec:l1_clamp_MIP} we present the MIP formulation for the 1-norm in $|R(x_t-x^*)|_1$ and the clamp function in the control policy.

By solving the mixed-integer programs in \eqref{eq:lyapunov_mip}, we either verify that the candidate controller is Lyapunov-stable with the candidate Lyapunov function $V(x_t)$ as a stability certificate; or we generate counter examples of $x_t$, where the objective values are positive, hence falsify the candidates. By maximizing the Lyapunov condition violation in the MIP \eqref{eq:lyapunov_mip}, we find not only \textit{a} counter example if one exists, but the \textit{worst} counter example with the largest violation. Moreover, since the MIP solver traverses a binary tree during branch-and-cut, where each node of the tree might find a counter example, the solver finds a list of counter examples during the solving process. In the next subsection, we use both the worst counter example and the list of all counter examples to reduce the Lyapunov violation.

\subsection{Trainer}
\label{subsec:trainer}
After the MIP verifier generates counter examples violating Lyapunov conditions, to reduce the violation, we use these counter examples to improve the candidate control policy and the candidate Lyapunov function. We present two iterative approaches. The first one minimizes a surrogate function on a training set, and the counter examples are appended to the training set in each iteration. This technique is widely used in the counter-example guided training \cite{chang2019neural, abate2020formal, chen2020learning, kapinski2014simulation}. The second approach minimizes the maximal Lyapunov condition violation directly by solving a min-max problem through gradient descent. In both approaches, we denote the parameters we search for as $\theta$, including
\begin{itemize}
	\item The weights/biases in the controller network $\phi_\pi$;
	\item The weights/biases in the Lyapunov network $\phi_V$;
	\item $r_1, \hdots, r_{n_x}$ in the full column-rank matrix $R$ (Eq. \eqref{eq:R_svd}).
\end{itemize}
namely we optimize both the control policy and the Lyapunov function simultaneously, so as to satisfy the Lyapunov condition on the closed-loop system.

\subsubsection{Approach 1, growing training set with counter examples}
\label{subsubsec:train_counter_examples}
A necessary condition for satisfying the Lyapunov condition for \text{any} state in $\mathcal{B}$, is that the Lyapunov condition holds for many sampled states within $\mathcal{B}$. Hence we could reduce a surrogate loss function on a training set $\mathcal{X}$ containing sampled states. The training set $\mathcal{X}$ grows after each MIP solve by appending the counter examples generated from the MIP solve. Since the MIP \eqref{eq:lyapunov_positivity_mip} and the MIP \eqref{eq:lyapunov_derivative_mip} generate different counter examples, we keep two separate training sets $\mathcal{X}_1$ and $\mathcal{X}_2$ for MIP \eqref{eq:lyapunov_positivity_mip} and MIP \eqref{eq:lyapunov_derivative_mip} respectively.

We design a surrogate loss function for $\mathcal{X}_1, \mathcal{X}_2$ to measure the violation of the Lyapunov condition on the training set. We denote the violation of condition \eqref{eq:lyapunov_positivity_l1} on a sample state $x^{i}_1\in\mathcal{X}_1$ as $\eta_1(x^{i}_1)$, and the violation of condition \eqref{eq:lyapunov_derivative} on a sample state $x^{i}_2 \in\mathcal{X}_2$ as $\eta_2(x^{i}_2)$, defined as
\begin{subequations}
\begin{align}
	\eta_1(x^{i}_1) =& \max(\epsilon_1|R(x^{i}_1-x^*)|_1 - V(x^{i}_1), 0)\label{eq:violation_1}\\
	\eta_2(x^{i}_2) =& \max(V(f(x^{i}_2, \pi(x^{i}_2))) - V(x^{i}_2) + \epsilon_2V(x^{i}_2), 0),
\end{align}
\end{subequations}
We denote $\eta_1(\mathcal{X}_1)$ and $\eta_2(\mathcal{X}_2)$ as the vectors whose i'th entry is the violation on the i'th sample $\eta_1(x^{i}_1)$ and $\eta_2(x^i_2)$ respectively, then our surrogate function is defined as
\begin{align}
	\text{loss}_\theta(\mathcal{X}_1, \mathcal{X}_2) = |\eta_1(\mathcal{X}_1)|_p + |\eta_2(\mathcal{X}_2)|_p, \label{eq:training_set_loss}
\end{align}
where $|\cdot|_p$ denotes the p-norm of a vector, such as 1-norm (mean of the violation), $\infty-$norm (maximal of the violation) and $4-$norm (a smooth approximation of the $\infty-$ norm). The subscript $\theta$ in the loss function \eqref{eq:training_set_loss} emphasizes its dependency on $\theta$, the parameters in both the controller and the Lyapunov function. We then minimize the surrogate loss function on the training set via standard batched gradient descent on $\theta$. The flow chart of this approach is depicted in Fig. \ref{fig:training_flow_chart}. Algorithm \ref{algorithm:train_on_sample} presents the pseudo-code.
\begin{figure}
	\centering
	\includegraphics[width=0.4\textwidth]{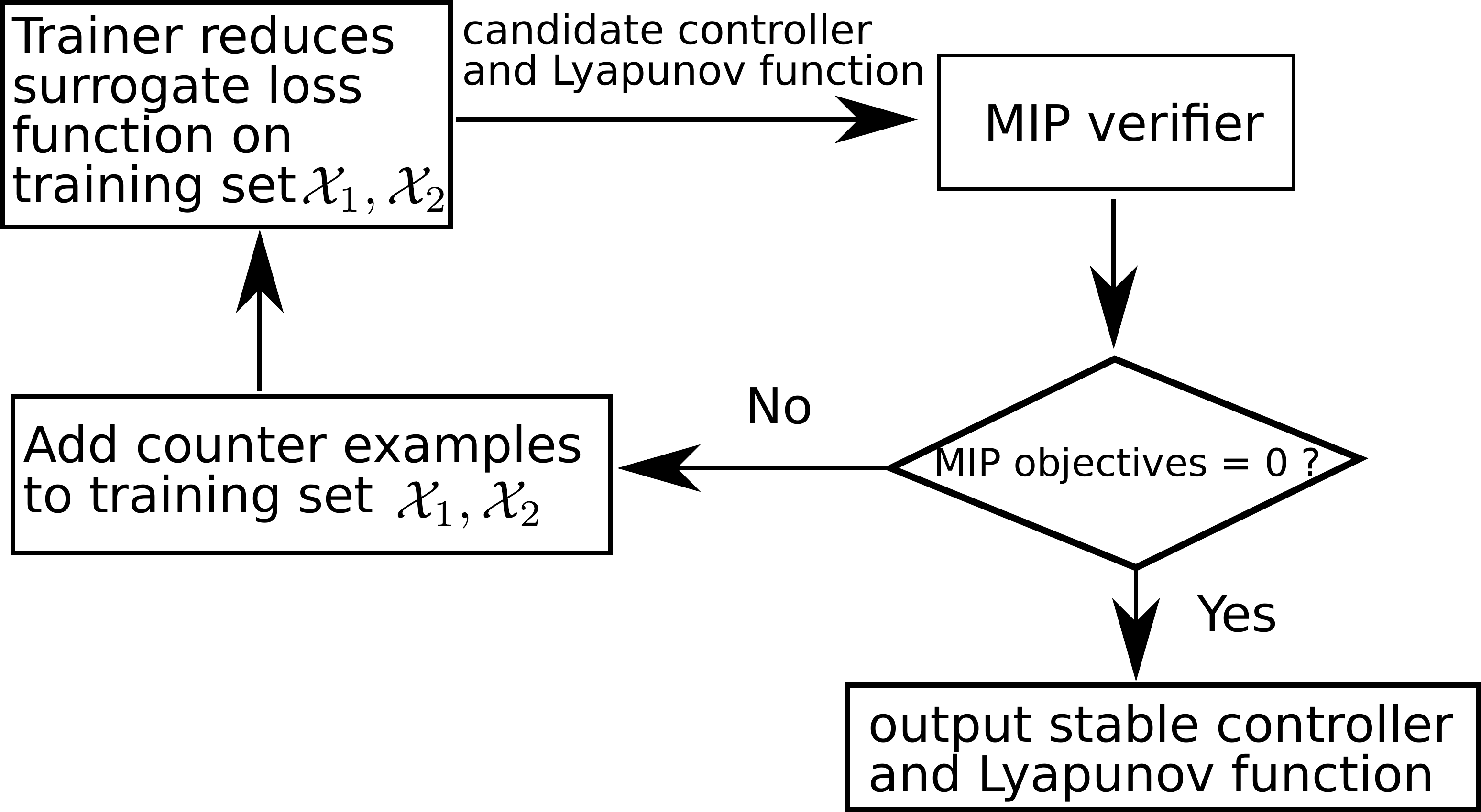}
	\caption{Flow chart of Algorithm \ref{algorithm:train_on_sample}.}
	\label{fig:training_flow_chart}
\end{figure}

\begin{algorithm}
\caption{Train controller/Lyapunov function on training sets constructed from verifier}
\label{algorithm:train_on_sample}
\begin{algorithmic}[1]
	\STATE Start with a candidate neural-network controller $\pi$, a candidate Lyapunov function $V$, and training sets $\mathcal{X}_1, \mathcal{X}_2$.
	\WHILE{not converged}
	\STATE Solve MIPs \eqref{eq:lyapunov_positivity_mip} and \eqref{eq:lyapunov_derivative_mip}.
	\IF{MIP \eqref{eq:lyapunov_positivity_mip} or MIP \eqref{eq:lyapunov_derivative_mip} has maximal objective $ > 0$}
		\IF{MIP \eqref{eq:lyapunov_positivity_mip} maximal objective $> 0$}
		\STATE Add the counter examples from MIP \eqref{eq:lyapunov_positivity_mip} to $\mathcal{X}_1$.
		\ENDIF
		\IF{MIP \eqref{eq:lyapunov_derivative_mip} maximal objective $>0$}
		\STATE Add the counter examples from MIP \eqref{eq:lyapunov_derivative_mip} to $\mathcal{X}_2$.
		\ENDIF
		\STATE Perform batched gradient descent on the parameters $\theta$ to reduce the loss function \eqref{eq:training_set_loss} on the training set $\mathcal{X}_1, \mathcal{X}_2$. Stop until either $\text{loss}_{\theta}(\mathcal{X}_1, \mathcal{X}_2)=0$, or reaches a maximal epochs.
	\ELSE
	\STATE converged = true.
	\ENDIF
	\ENDWHILE
\end{algorithmic}
\end{algorithm}
Since the surrogate loss function is the Lyapunov condition violation on just the sampled states, the batched gradient descent will overfit to the training set, and potentially cause large violation away from the sampled states. To avoid this overfitting problem, we consider an alternative approach without constructing the training sets.
\subsubsection{Approach 2, minimize the violation via min-max program}
\label{subsubsec:bilevel_approach}
Instead of minimizing a surrogate loss function on a training set, we can minimize the Lyapunov condition violation directly through the following min-max problem
\begin{equation}
	\begin{split}
		\min_{\theta}\left(\underbrace{\max_{x_t\in\mathcal{B}} \epsilon_1|R(x_t-x^*)|_1-V(x_t)}_{\text{MIP } \eqref{eq:lyapunov_positivity_mip}} \right.\qquad\qquad\\
				   \left.+ \underbrace{\max_{x_t\in\mathcal{B}}V(x_{t+1}) - V(x_t) + \epsilon_2 V(x_t)}_{\text{MIP }\eqref{eq:lyapunov_derivative_mip}}\right),\label{eq:bilevel_optimization}
\end{split}
\end{equation}
where $\theta$ are the parameters in the controller and the Lyapunov function, introduced at the beginning of this subsection \ref{subsec:trainer}. Unlike the traditional optimization problem, where the objective function is a closed-form expression of the decision variable $\theta$, in our problem \eqref{eq:bilevel_optimization} the objective function is the result of other maximization problems, whose coefficients and bounds of the constraint/cost matrices depend on $\theta$. In order to solve this min-max problem, we adopt an iterative procedure. In each iteration we first solve the inner maximization problem using MIP solvers, and then compute the gradient of the MIP optimal objective w.r.t the variables $\theta$, finally we apply gradient descent along this gradient direction, so as to reduce the objective in the outer minimization problem.

To compute the gradient of the maximization problem objective w.r.t $\theta$, after solving the inner MIP to optimality, we fix all the binary variables to their optimal solutions, and keep only the active linear constraints. The inner maximization problem can then be simplified to
\begin{subequations}
\begin{align}
	\gamma(\theta)=\max_{s} c_\theta^Ts + d_\theta\\
	\text{s.t } A_\theta s = b_\theta,
\end{align}
\label{eq:inner_maximization}%
\end{subequations}
where the problem coefficients/bounds $c_\theta, d_\theta, A_\theta, b_\theta$ are all explicit functions of $\theta$. $s$ contains all the continuous variables in the MIP, including $x_t$ and other slack variables. The optimal cost of \eqref{eq:inner_maximization} can be written in the closed form as $\gamma(\theta) = c_\theta^TA_\theta^{-1}b_\theta + d_\theta$, and then we can compute the gradient $\partial\gamma(\theta)/\partial\theta$ by back-propagating this closed-form expression. Note that this gradient is well defined if a tiny perturbation on $\theta$ changes only the optimal value of the continuous variables $s$, but not the set of active constraints or the optimal binary variable values (changing them would make the gradient ill-defined). This technique to differentiate the optimization objective w.r.t neural network parameters is becoming increasingly popular in the deep learning community. The interested readers can refer to \cite{amos2017optnet, agrawal2019differentiable} for a more complete treatment on differentiating an optimization layer.

Algorithm \ref{algorithm:bilevel} shows pseudo-code for this min-max optimization approach.
\begin{algorithm}
	\caption{Train controller/Lyapunov function through min-max optimization}
	\label{algorithm:bilevel}
	\begin{algorithmic}[1]
		\STATE Given a candidate control policy $\pi$ and a candidate Lyapunov function $V$.
		\WHILE{not converged}
		\STATE Solve MIP \eqref{eq:lyapunov_positivity_mip} and \eqref{eq:lyapunov_derivative_mip}.
			\IF{Either of MIP \eqref{eq:lyapunov_positivity_mip} of \eqref{eq:lyapunov_derivative_mip} has maximal objective $>0$}
			\STATE Compute the gradient of the MIP objectives w.r.t $\theta$, denote this gradient as $\partial \gamma/\partial\theta$.
			\STATE $\theta = \theta - \text{StepSize}*\partial\gamma/\partial\theta$.
			\ENDIF
		\ENDWHILE
	\end{algorithmic}
\end{algorithm}

\subsection{Computing region of attraction}
\label{subsec:roa}
After the training process in section \ref{subsec:trainer} converges to satisfy the Lyapunov condition for every state inside the bounded polytope $\mathcal{B}$, we compute an inner approximation of the  region of attraction for the closed-loop system. (Notice that the verified region $\mathcal{B}$ is \textit{not} a region of attraction, since it's not an invariant set, while the sub-level sets of $V$ are guaranteed to be invariant). One valid inner approximation is the largest sub-level set $\mathcal{S} = \{x_t | V(x_t)\le \rho\}$ contained inside the verified region $\mathcal{B}$, as illustrated in Fig. \ref{fig:roa}. Since we already obtained the Lyapunov function $V(x_t)$ in the previous section, we only need to find the largest value of $\rho$ such that $\mathcal{S}\subset \mathcal{B}$. 
Equivalently we can find $\rho$ through the following optimization problem
\begin{align}
	\rho = \min_{x_t\in\partial \mathcal{B}} V(x_t), \label{eq:roa_mip}
\end{align}
where the compact set $\partial \mathcal{B}$ is the boundary of the polytopic region $\mathcal{B}$, and the constraint $x_t\in\partial\mathcal{B}$ can be formulated as mixed-integer linear constraints (with one binary variable for a face of the polytope $\mathcal{B}$). As explained previously, the Lyapunov function $V(x_t)$ is a piecewise-affine function of $x_t$, hence the optimization problem \eqref{eq:roa_mip} is again an MIP, and can be solved efficiently by MIP solvers. 

It is worth noting that the size of this inner approximation of the region of attraction can be small, as we fix the Lyapunov function and only search for its sub-level set. To verify a larger inner approximation, one possible future research direction is to search for the Lyapunov function and the sub-level set simultaneously, as in \cite{richards2018lyapunov}.

\section{Results}
We synthesize stable controllers and Lyapunov functions on pendulum, 2D and 3D quadrotors. We use Gurobi as the MIP solver. All code runs on an Intel Xeon CPU. The sizes of the neural networks are shown in Table \ref{table:network_size} in Appendix \ref{subsec:network_structure}.

\subsection{Inverted pendulum}
\label{subsec:inverted_pendulum}
We first test our approach on an inverted pendulum. We approximate the pendulum Lagrangian dynamics using a neural network, by first simulating the system with many state/action pairs, and then approximating the simulation data through regression. To stabilize the pendulum at the top equilibrium $\theta=\pi, \dot{\theta}=0$, we synthesize a neural-network controller and a Lyapunov function using both Algorithm \ref{algorithm:train_on_sample} and \ref{algorithm:bilevel}. We verify the Lyapunov condition in the box region $0 \le \theta \le 2\pi, -5\le\dot{\theta}\le 5$. The Lyapunov function $V$ is shown in Fig. \ref{fig:pendulum_V}.
\begin{figure}
\begin{minipage}{0.21\textwidth}
\centering
	\includegraphics[width=0.98\textwidth]{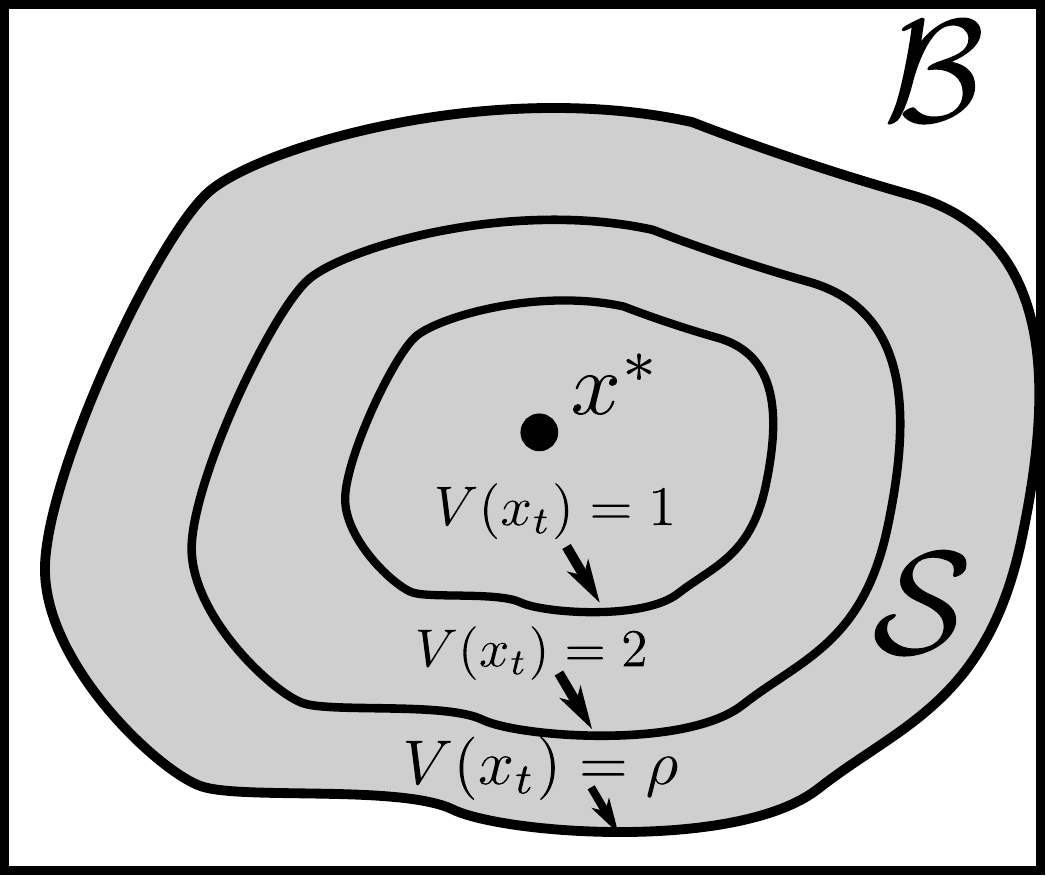}
	\caption{An inner approximation of the region of attraction $\mathcal{S}$ is the largest sub-level set $V(x_t)\le \rho$ contained inside the verified region $\mathcal{B}$, where the Lyapunov function is positive definite and strictly decreasing.}
	\label{fig:roa}
\end{minipage}
\hspace{2pt}
\begin{minipage}{0.26\textwidth}
	\centering
		\includegraphics[width=0.98\textwidth]{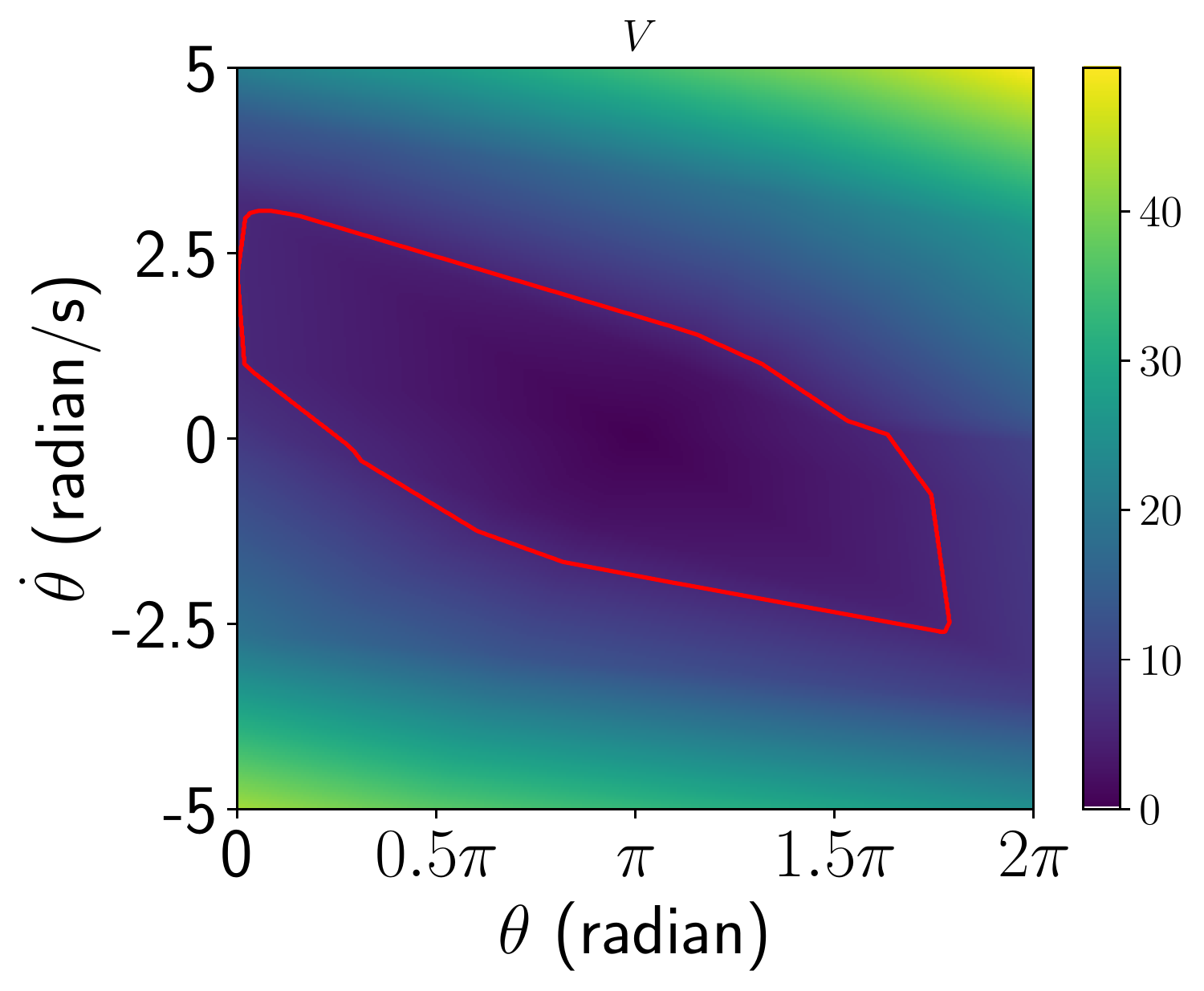}
	\caption{Heatmap of the Lyapunov function for the inverted pendulum. The red contour is the boundary of the verified inner approximation of the region of attraction, as the largest sub-level set contained in the verified box region $0\le \theta \le 2\pi, -5\le \dot{\theta}\le 5$.}
	\label{fig:pendulum_V}
	\end{minipage}
\end{figure}

We simulate the synthesized controller with the original pendulum Lagrangian dynamics model (not the neural network dynamics $\phi_{\text{dyn}}$). The result is shown in Fig. \ref{fig:pendulum_nn_simulate}. Although the neural network dynamics $\phi_{\text{dyn}}$ has approximation error, the simulation results show that the neural-network controller swings up and stabilizes the pendulum for not only the approximated neural network dynamics, but also for the original Lagrangian dynamics. Moreover, starting from many states outside the verified region of attraction, and even outside our verified box region, the trajectories still converge to the equilibrium. This suggests that the controller generalizes well. The small verified region of attraction suggests that in the future we can improve its size by searching over the Lyapunov function and the sub-level set simultaneously.

We start with a small box region $0.8 \pi \le \theta \le 1.2\pi, -1 \le \dot{\theta}\le 1$, and then gradually increase the verified region. We initialize the controller/Lyapunov network as the solution in the previous iteration on a smaller box region (at the first iteration, all parameters are initialized arbitrarily). For the smaller box $0.8\pi\le\theta\le1.2\pi, -1\le\dot{\theta}\le 1$, both algorihm \ref{algorithm:train_on_sample} and \ref{algorithm:bilevel} converge within a few minutes. For the larger box $0 \le\theta\le 2\pi, -5\le\dot{\theta}\le 5$, both algorithms converge within 3 hours.
\begin{figure}
	\begin{subfigure}{0.23\textwidth}
		\includegraphics[width=0.98\textwidth]{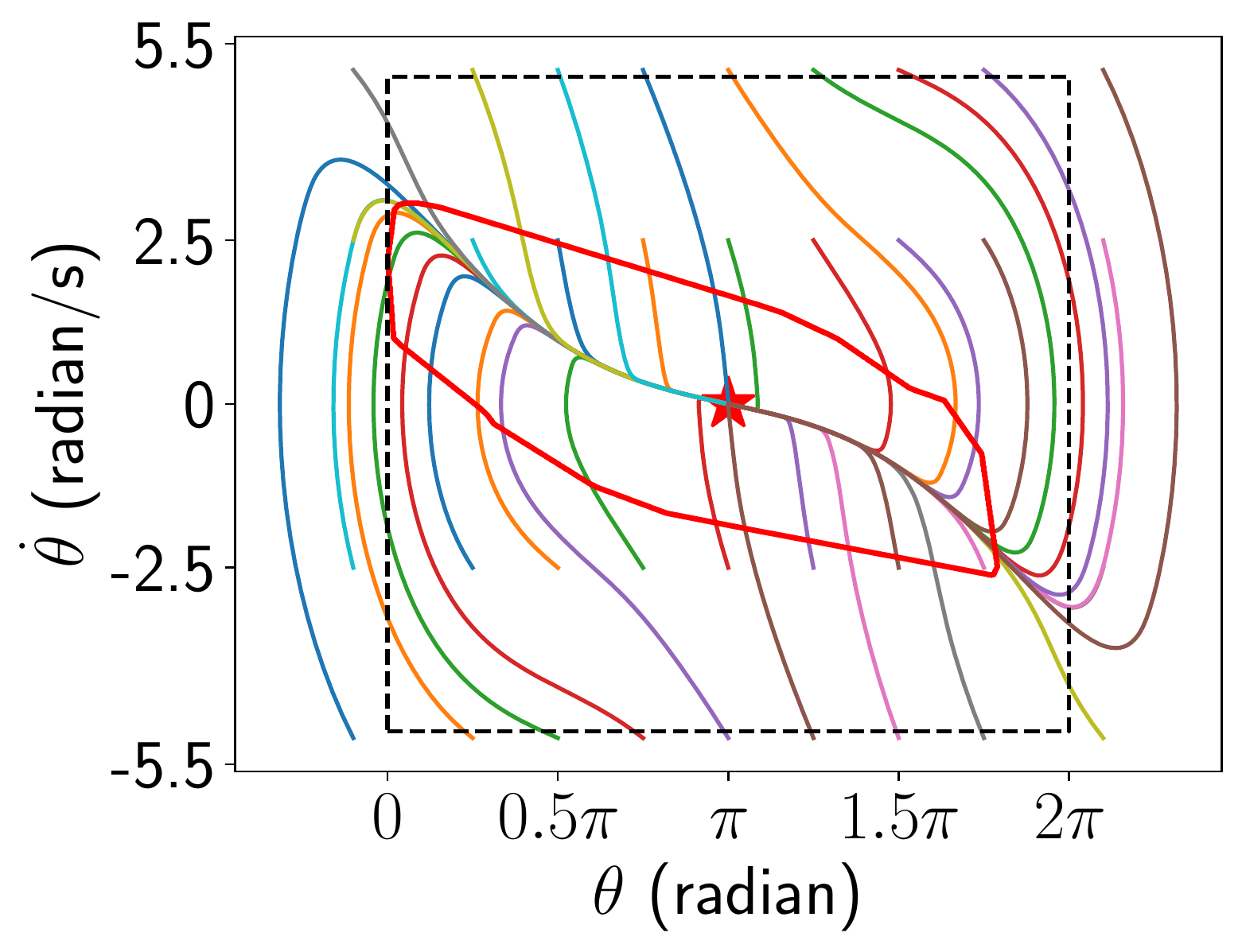}
	\end{subfigure}
	\begin{subfigure}{0.23\textwidth}
		\includegraphics[width=0.98\textwidth]{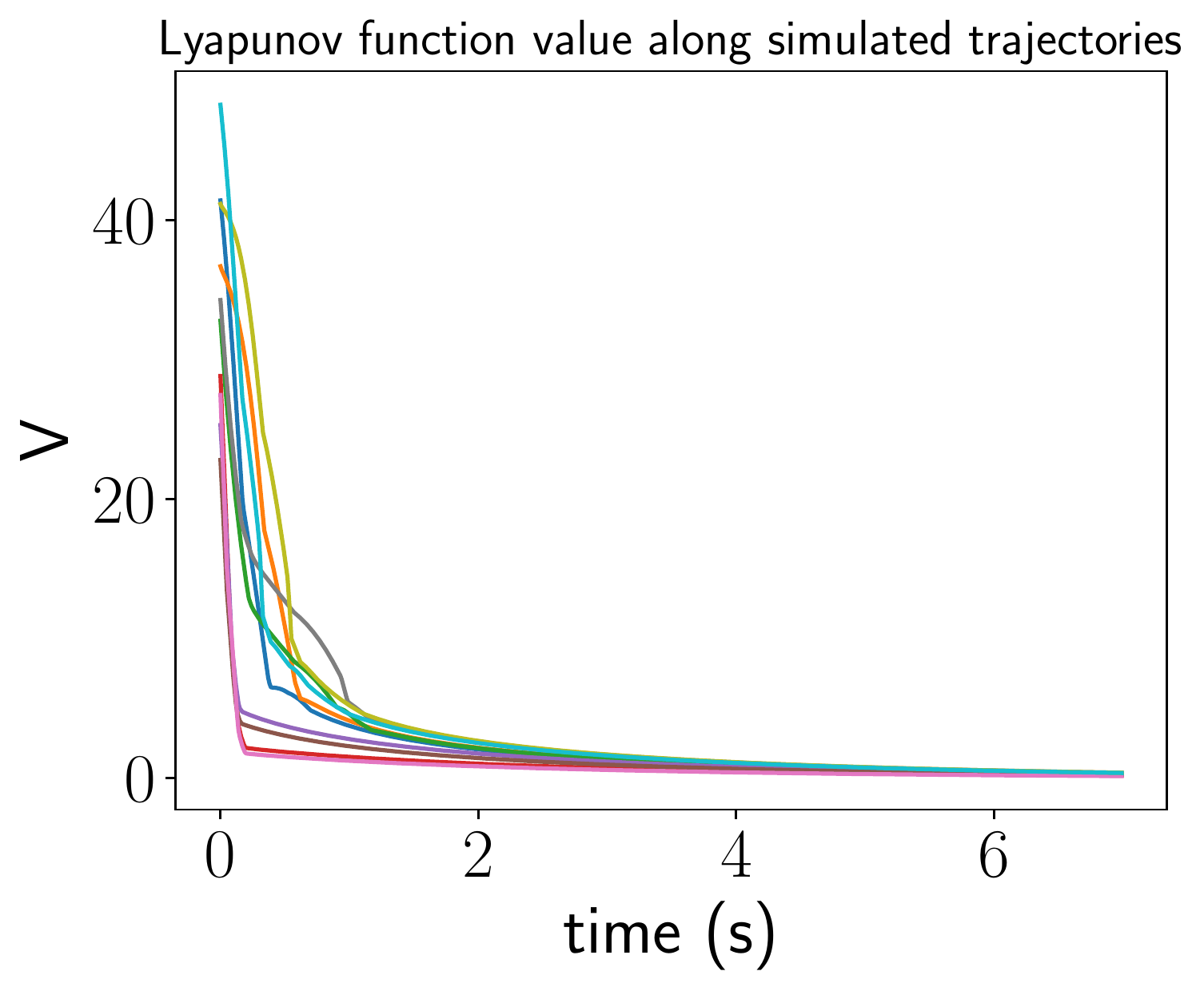}
	\end{subfigure}
	\caption{(left) phase plot of simulating the pendulum Lagrangian dynamics with the neural-network controller. The red contour is the boundary of the verified region of attraction, as the largest level set within the verified box region $\mathcal{B}$ (black dashed box). All the simulated trajectories (even starting outside of the dashed box) converge to the goal state. (right) Lyapunov function value along the simulated trajectories. The Lyapunov function decreases monotonically along the trajectories.}
	\label{fig:pendulum_nn_simulate}
	\vspace{-12pt}
\end{figure}

\subsection{2D quadrotor}
\label{subsec:quadrotor_2d}
We synthesize a stabilizing controller and a Lyapunov function for the 2D quadrotor model used in \cite{tedrake2009underactuated}. Again we first train a neural network $\phi_{\text{dyn}}$ to approximate the Lagrangian dynamics. Our goal is to steer the quadrotor to hover at the origin. In Fig.\ref{fig:quadrotor2d_snapshot} we visualize the snapshots of the quadrotor stabilized by our neural-network controller. We verified the Lyapunov conditions in the region $[-0.75, -0.75, -0.5\pi, -4, -4, -2.75] \le [x, z, \theta, \dot{x}, \dot{z}, \dot{\theta}] \le [0.75, 0.75, 0.5\pi, 4, 4, 2.75]$.
\begin{figure}
	\begin{subfigure}{0.24\textwidth}
		\includegraphics[width=0.97\textwidth]{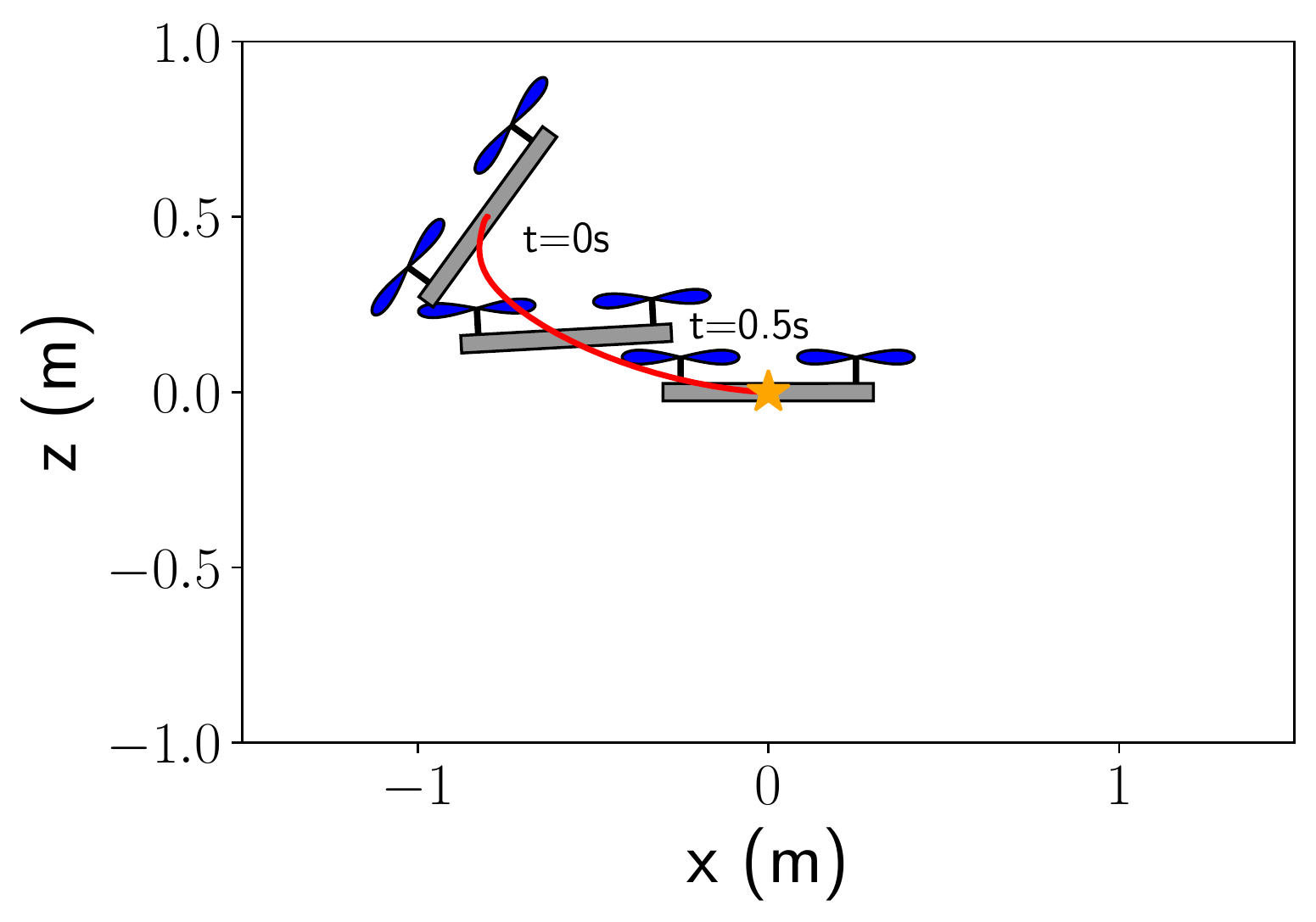}
	\end{subfigure}
	\begin{subfigure}{0.24\textwidth}
		\includegraphics[width=0.97\textwidth]{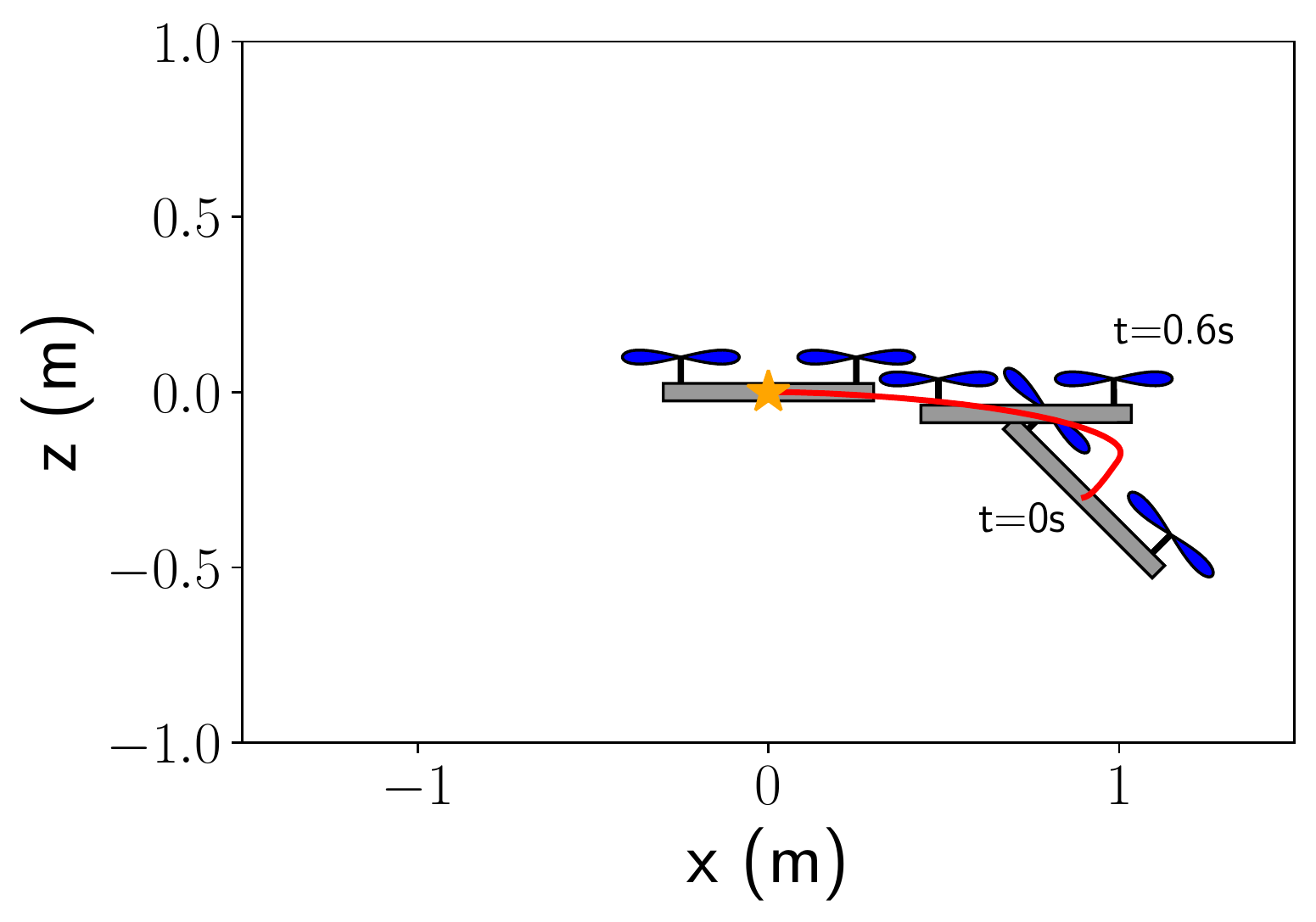}
	\end{subfigure}
	\caption{Snapshots of the 2D quadrotor simulation (with the original Lagrangian dynamics) using our neural-network controller from different initial states. The red lines are the trajectories of the quadrotor body frame origin.}
	\label{fig:quadrotor2d_snapshot}
\end{figure}

We sample 10000 initial states uniformly in the box $[-0.9, -0.9, -0.5\pi, -4.5, -4.5, -3] \le [x, z, \theta, \dot{x}, \dot{z},\dot{\theta}]\le[0.9, 0.9, 0.5\pi, 4.5, 4.5, 3]$. For each initial state we simulate the Lagrangian dynamics with the neural network and an LQR controller. We summarize the result in table \ref{table:quadrotor2d_sim} on whether the simulation converges to the goal state or not. More states can be stabilized by the neural-network controller than the LQR controller. Moreover, the off-diagonal entries in Table \ref{table:quadrotor2d_sim} demonstrates that the set of sampled states that are stabilized by the neural-network controller is a strict super-set of the set of states stabilized by the LQR controller. We believe there are two factors contributing to the advantage of our neural-network controller against an LQR: 1) the neural-network controller is piecewise linear while the LQR controller is linear; the latter can be a special case of the former. 2) the neural-network controller is aware of the input limits while the LQR controller is not.

\begin{table}
	\centering
	\begin{tabular}{|c | c | c|}
		\hline
		& NN succeeds & NN fails\\
		\hline
		LQR succeeds & 8078& 0\\
		\hline
		LQR fails& 1918& 4\\
		\hline
	\end{tabular}
	\caption{Number of success/failure for $10,000$ simulations of 2D quadrotor with the neural network (NN) controller and an LQR controller. The simulation uses the Lagrangian dynamics.}
	\label{table:quadrotor2d_sim}
\end{table}

We then focus on certain two dimensional slices of the state space, and sample many initial states on these slices. For each sampled initial state we simulate the Lagrangian dynamics using both the neural-network and the LQR controller. We visualize the simulation results in Fig. \ref{fig:quadrotor2d_lqr_vs_nn}. Each dot represents a sampled initial state, and we color each initial state based on whether the neural-network (NN)/LQR controllers succeed in stabilizing that initial state to the goal
\begin{itemize}
    \item Purple: NN succeeds but LQR fails.
    \item Green: both NN and LQR succeed
    \item Red: both NN and LQR fail.
\end{itemize}
Evidently the large purple region suggests that the region of attraction with the neural-network controller is a strict super-set of that with the LQR controller. We observe that for the initial states where LQR fails, the LQR controller requires thrusts beyond the input limits. If we increase the input limits then LQR can stabilize many of these states. Hence by taking input limits into consideration, the neural-network controller achieves better performance than the LQR.
\begin{figure}
	\begin{subfigure}{0.23\textwidth}
		\includegraphics[width=0.98\textwidth]{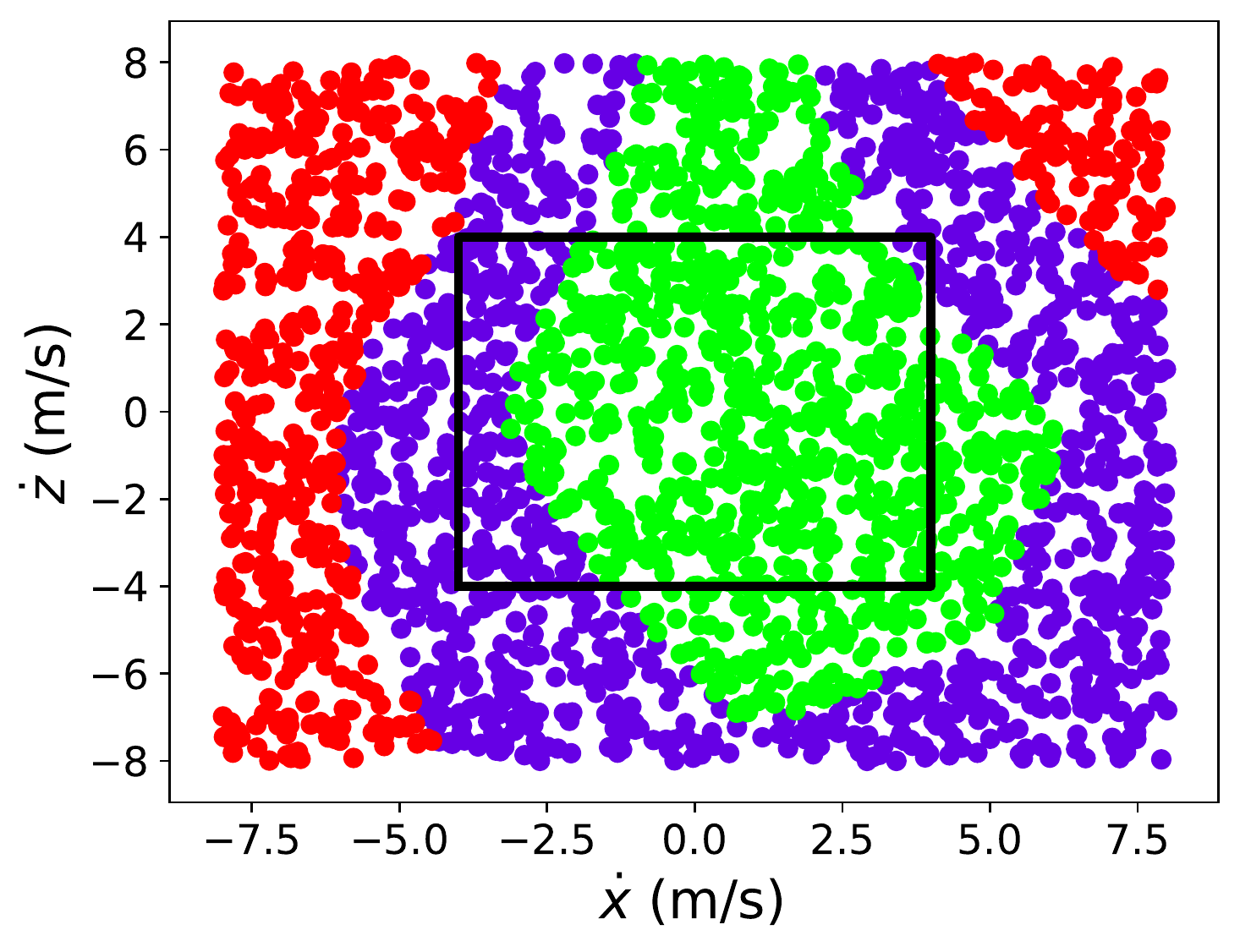}
	\end{subfigure}
	\begin{subfigure}{0.23\textwidth}
		\includegraphics[width=0.98\textwidth]{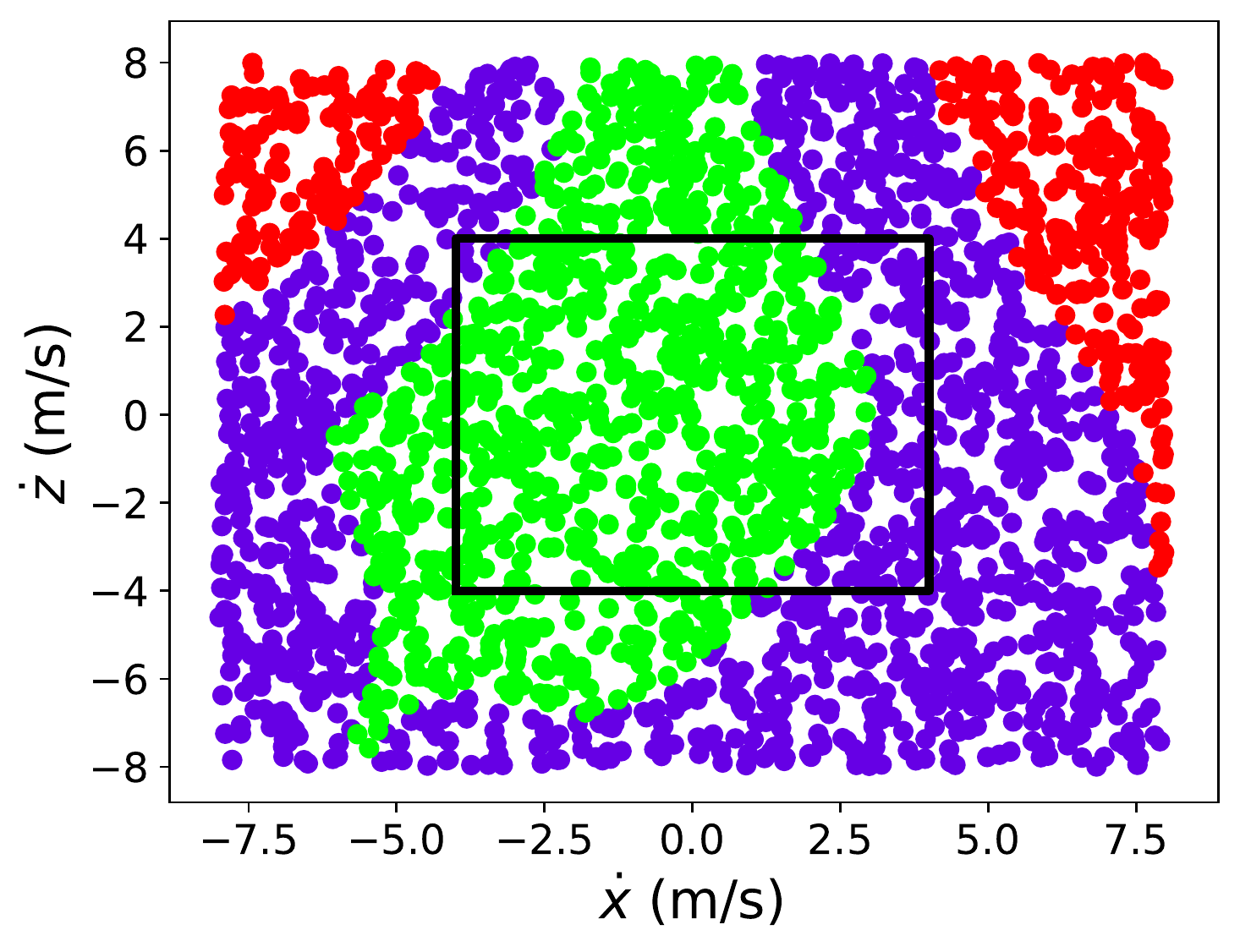}
	\end{subfigure}
	\caption{We sample 2500 initial states within the box region $[-8, -8]\le [\dot{x}, \dot{z}] \le [8, 8]$, with $[x, z, \theta,  \dot{\theta}]$ fixed to $[-0.75, 0.3, 0.3\pi, 2]$ (left), and $[0.75, 0.5, -0.4\pi, 2]$ (right). We simulate from each initial state with the neural-network controller (NN) and the LQR controller, and color each initial state based on whether the simulation converges to the goal. All red dots (where the NN controller fails) are outside of the black box region within which we verified the Lyapunov conditions.} 
	\label{fig:quadrotor2d_lqr_vs_nn}
\end{figure}

Both algorithm \ref{algorithm:train_on_sample} and \ref{algorithm:bilevel} find the stabilizing controller. For a small box region $[-0.1, -0.1, -0.1\pi, -0.5, -0.5, -0.3] \le [x, z, \theta, \dot{x}, \dot{z}, \dot{\theta}]\le [0.1, 0.1, 0.1\pi, 0.5, 0.5, 0.3]$, both algorithms converge in 20 minutes. For the larger box used in Table \ref{table:quadrotor2d_sim}, the algorithms converge in 1 day.

\subsection{3D quadrotor}
We apply our approach to a 3D quadrotor model with 12 states \cite{mellinger2011minimum}. Again, our goal is to steer the quadrotor to hover at the origin. As visualized in Fig.\ref{fig:quadrotor3d_snapshot}, our neural-network controller can stabilize the system. Training this controller took 3 days.
\begin{figure}
	\begin{subfigure}{0.24\textwidth}
		\includegraphics[width=0.98\textwidth]{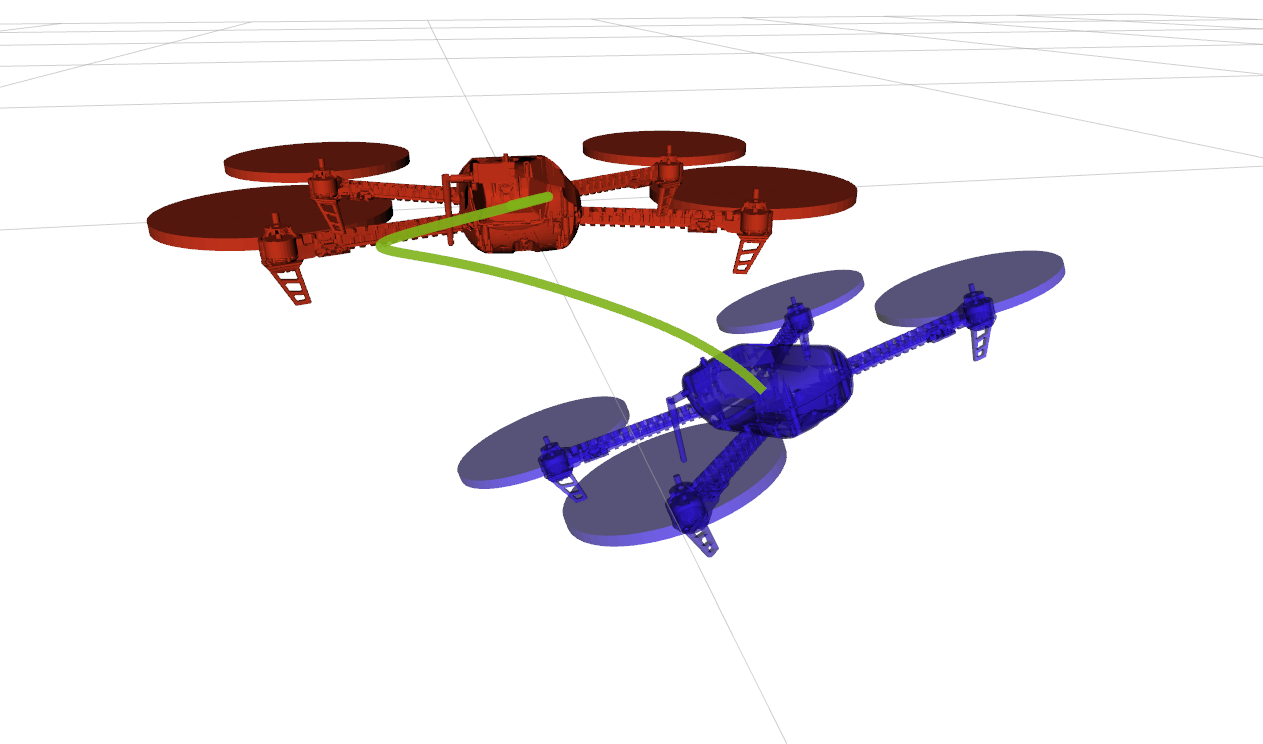}
	\end{subfigure}
	\begin{subfigure}{0.24\textwidth}
		\includegraphics[width=0.98\textwidth]{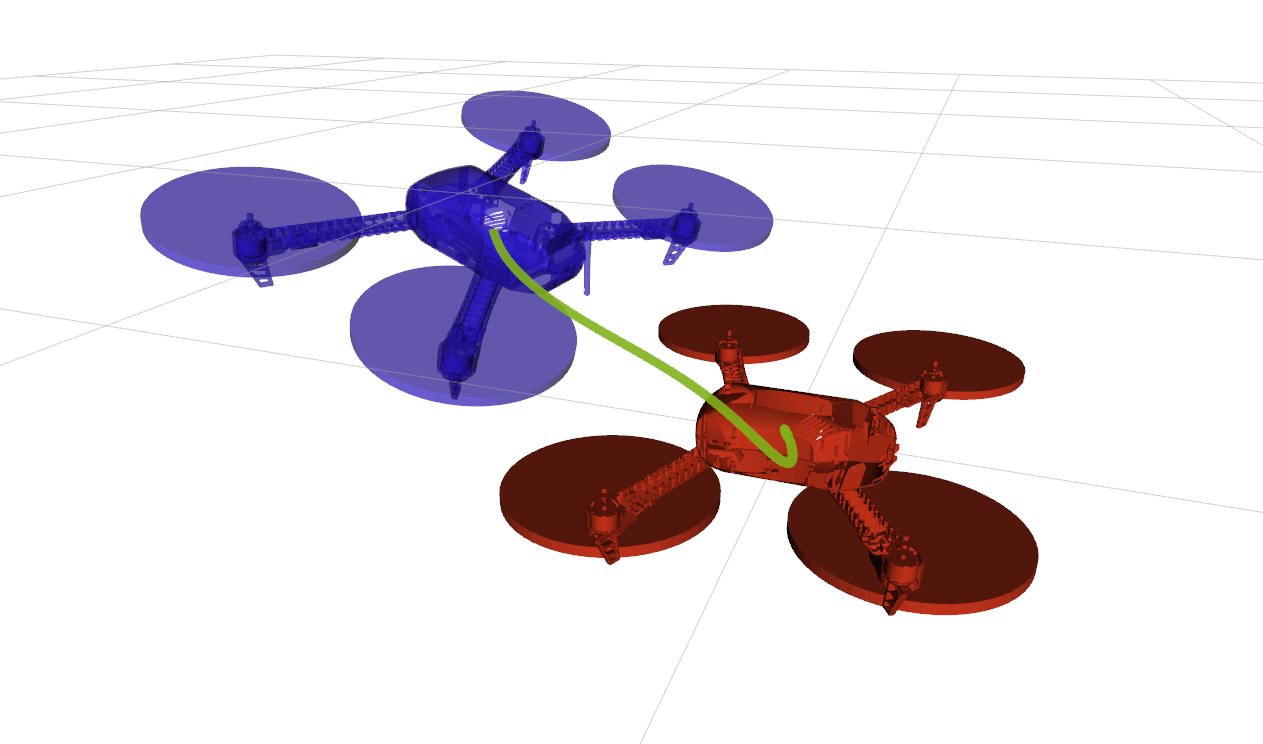}
	\end{subfigure}
	\caption{Snapshots of simulating the quadrotor using our neural-network controller with the Lagrangian dynamics. The quadrotor converge to the hovering state at the origin (red).}
	\label{fig:quadrotor3d_snapshot}
\end{figure}

The quadrotor Lagrangian dynamics is approximated by a neural network $\phi_{\text{dyn}}$ with comparatively large mean-squared error (MSE) around $10^{-4}$ (reducing MSE would require a neural network too large for our MIP solver), while other examples in this paper have MSE in the order of $10^{-6}$. Hence there are noticeable discrepancies between the simulation with Lagrangian dynamics and with the neural network dynamics $\phi_{\text{dyn}}$. In Fig \ref{fig:quadrotor3d_V_traj} we select some results to highlight the discrepancy, that the Lyapunov function always decreases along the trajectories simulated with neural-network dynamics, while it could increase with Lagrangian dynamics. Nevertheless, the quadrotor eventually always converges to the goal state. We note that the same phenomenon would also happen if we took a linear approximation of the quadrotor dynamics and stabilized the quadrotor with an LQR controller. If we were to plot the quadratic Lyapunov function (which is valid for the LQR controller and the linearized dynamics), that Lyapunov function could also increase along the trajectories simulated with the nonlinear Lagrangian dynamics (see Fig \ref{fig:quadrotor3d_lqr_V_traj} in the Appendix). Analogous to approximating the nonlinear dynamics with a linear one and stabilizing it with a linear LQR controller, our approach can be regarded as approximating the nonlinear dynamics with a neural network and stabilizing it with another neural-network controller. 
\begin{figure}
	\begin{subfigure}{0.24\textwidth}
		\includegraphics[width=0.98\textwidth]{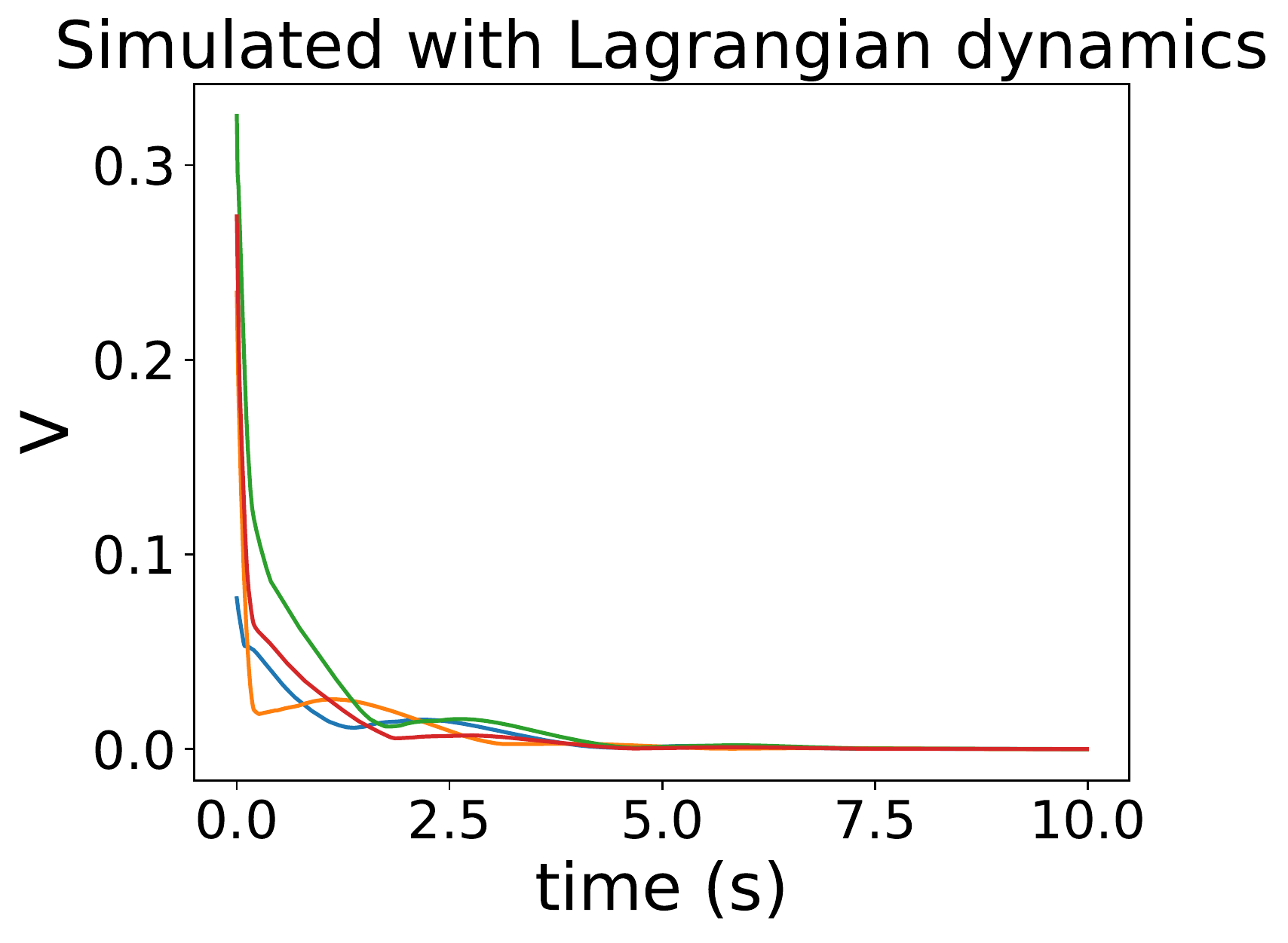}
	\end{subfigure}
	\begin{subfigure}{0.24\textwidth}
		\includegraphics[width=0.98\textwidth]{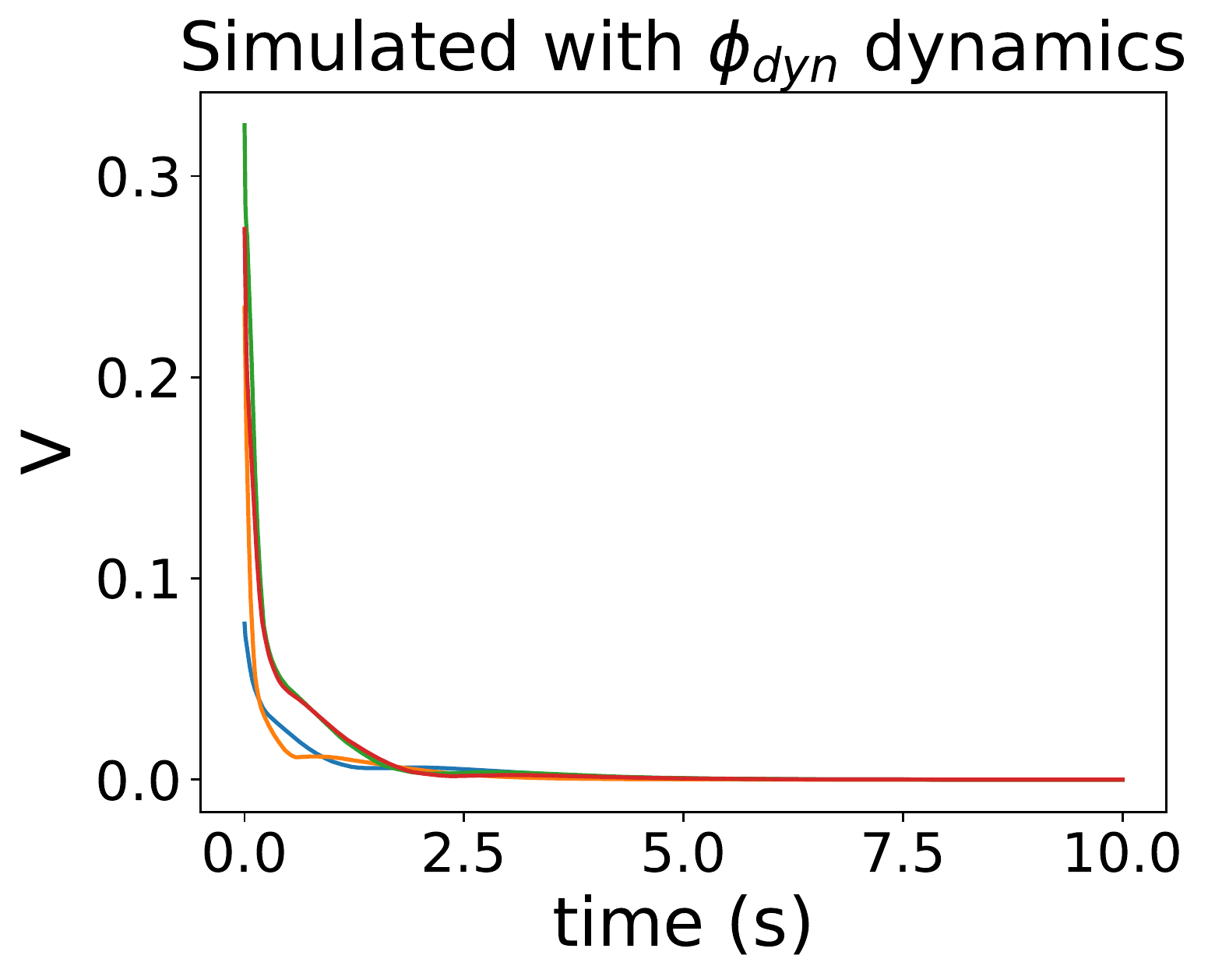}
	\end{subfigure}
	\caption{3D quadrotor Lyapunov function value along the simulated trajectories with our neural-network controller. The quadrotor is simulated with Lagrangian dynamics (left) vs the dynamics approximated by a neural network $\phi_{\text{dyn}}$ (right). In both left and right sub-plots, the initial states are the same.}
	\vspace{-10pt}
	\label{fig:quadrotor3d_V_traj}
\end{figure}

Finally we compare the performance of Algorithm \ref{algorithm:train_on_sample} which appends counter examples to training sets, against Algorithm \ref{algorithm:bilevel} with min-max optimization. We take 10 runs of each algorithm, and report the average computation time for each algorithm in Table \ref{table:algorithm_time} \footnote{There are 2 failed runs with Algorithm \ref{algorithm:train_on_sample} on the 2D quadrotor example, that they time out after 6 hours, and are not included in Table \ref{table:algorithm_time}. For the 3D quadrotor we only include 3 runs as they are too time-consuming}. For the small-sized task (pendulum with 2 states), Algorithm \ref{algorithm:train_on_sample} is orders of magnitude faster than Algorithm \ref{algorithm:bilevel}, and they take about the same time on the medium-sized task (2D quadrotor with 6 states). On the large-sized task (3D quadrotor with 12 states), Algorithm \ref{algorithm:train_on_sample} doesn't converge while Algorithm \ref{algorithm:bilevel} can find the solution. We speculate this is because Algorithm \ref{algorithm:train_on_sample} overfits to the training set. For a small-sized task the overfitting is not a severe problem as a small number of sampled states are sufficient to represent the state space; while for a large-sized task it would require a huge number of samples to cover the state space. With the limited number of counter examples Algorithm \ref{algorithm:train_on_sample} overfits to these samples while causing large Lyapunov condition violation elsewhere. This is evident from the loss curve plot in Fig.\ref{fig:loss_for_algorithms} for a 2D quadrotor task. Although both algorithms converge, the loss curve decreases steadily with Algorithm \ref{algorithm:bilevel}, while it fluctuates wildly with Algorithm \ref{algorithm:train_on_sample}. We believe that the fluctuation is caused by overfitting to the training set in the previous iteration. Nevertheless, this comparison is not yet conclusive, and we are working to improve the performance of Algorithm \ref{algorithm:train_on_sample} on the large-size task.

\begin{table}
\centering
\begin{tabular}{|c|c|c|}
\hline
      &  Algorithm \ref{algorithm:train_on_sample} &Algorithm \ref{algorithm:bilevel}\\
     \hline
     Pendulum&8.4s& 224s\\
     2D quadrotor &948.3s & 1004.7s\\
     3D quadrotor &Time out after 5 days&65.7hrs\\
     \hline
\end{tabular}
\caption{Average computation time of 10 runs for both algorithms. To speed up the computation, the verified region $\mathcal{B}$ is relatively small.}
\label{table:algorithm_time}
\end{table}
\begin{figure}
    \centering
    \begin{subfigure}{0.24\textwidth}
    \includegraphics[width=0.98\textwidth]{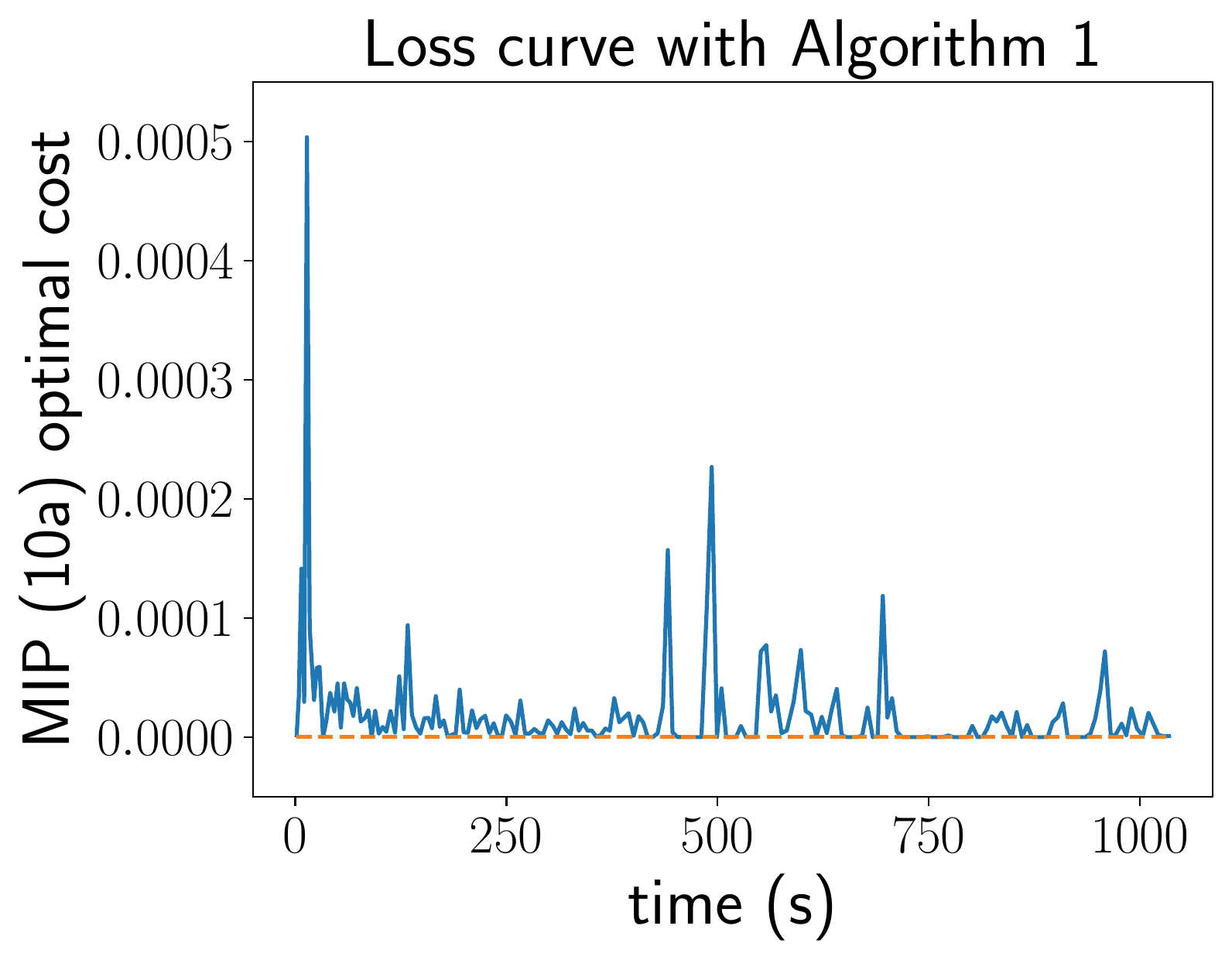}
    \end{subfigure}
    \begin{subfigure}{0.24\textwidth}
    \includegraphics[width=0.98\textwidth]{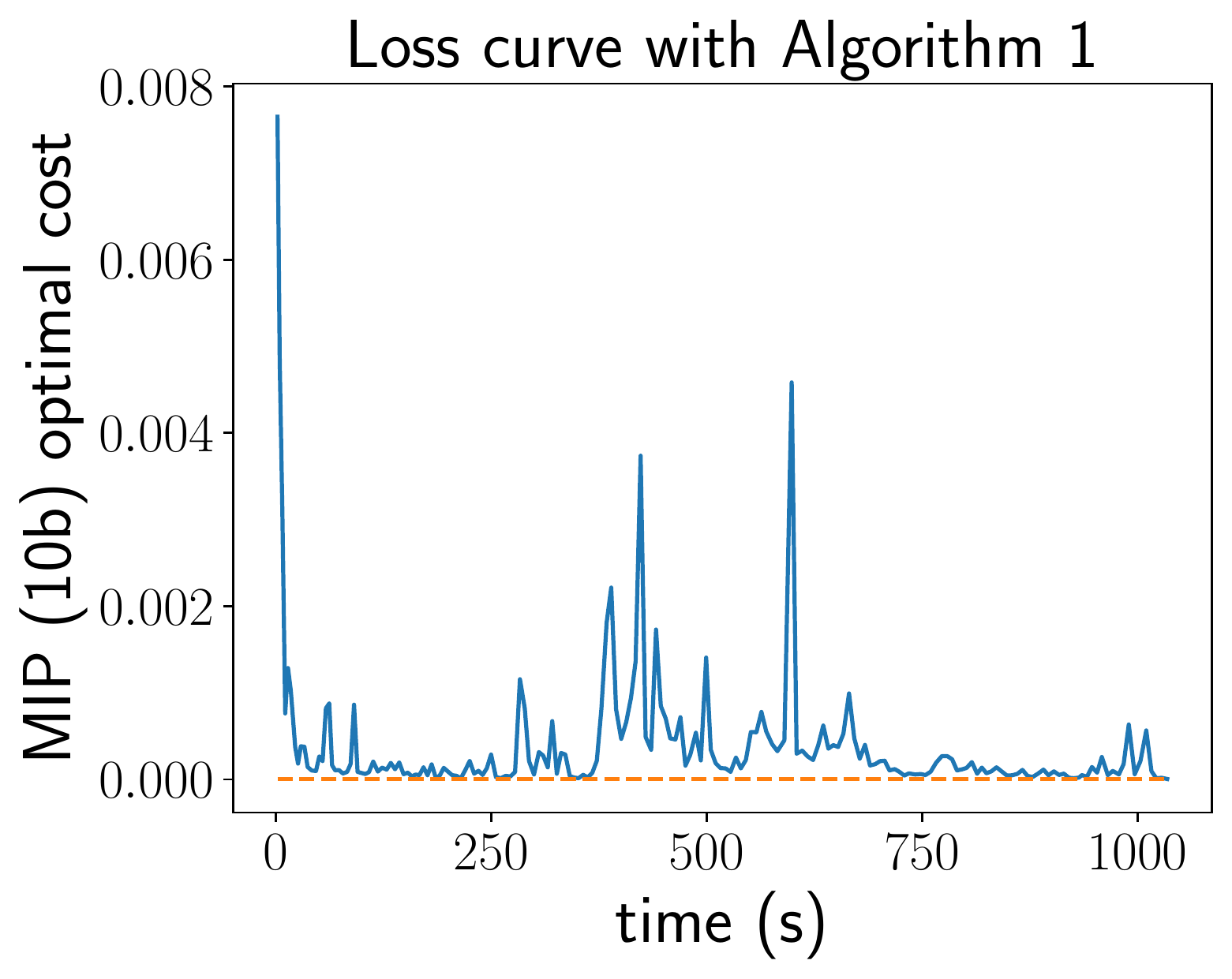}
    \end{subfigure}
    \begin{subfigure}{0.24\textwidth}
    \includegraphics[width=0.98\textwidth]{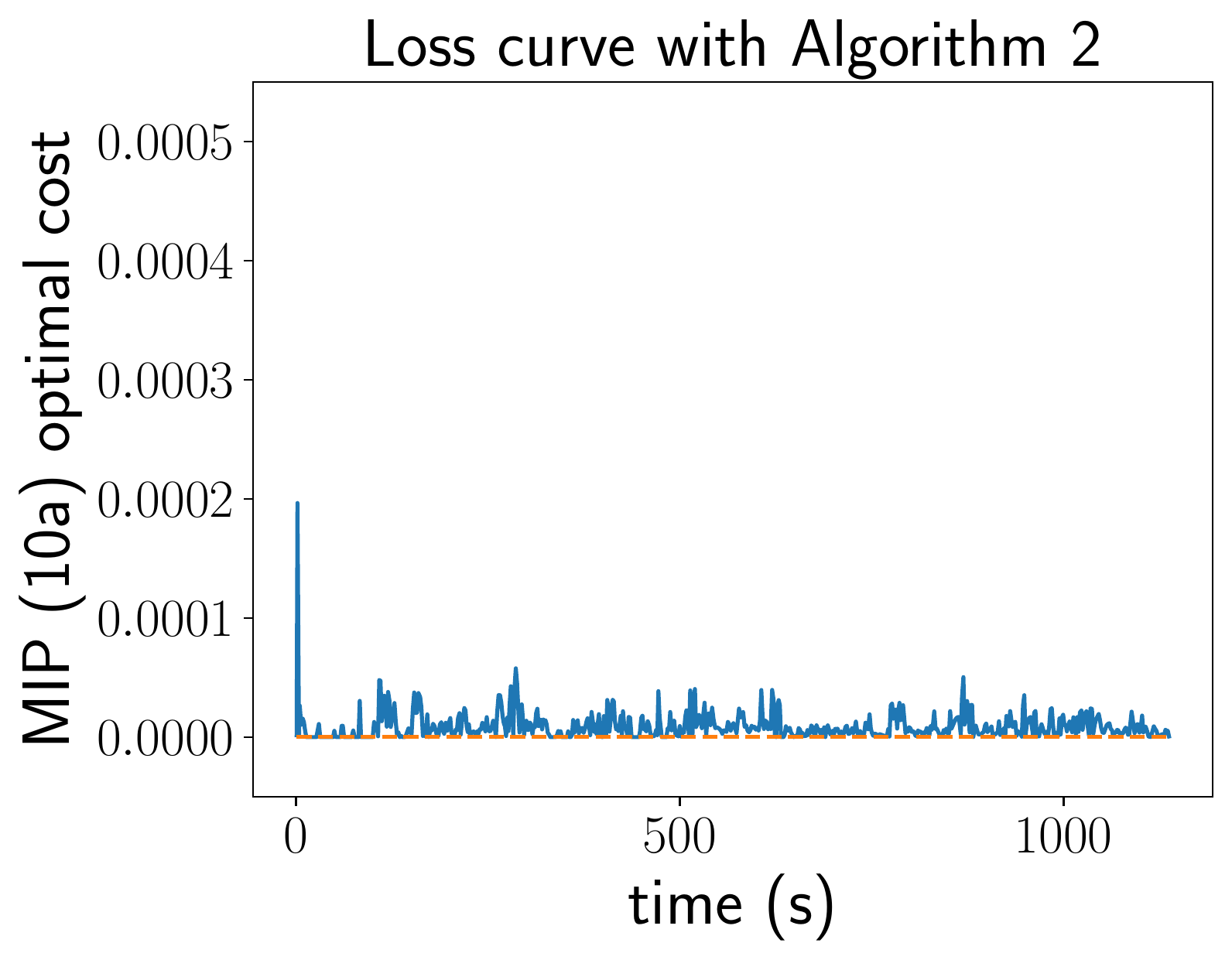}
    \end{subfigure}
    \begin{subfigure}{0.24\textwidth}
    \includegraphics[width=0.98\textwidth]{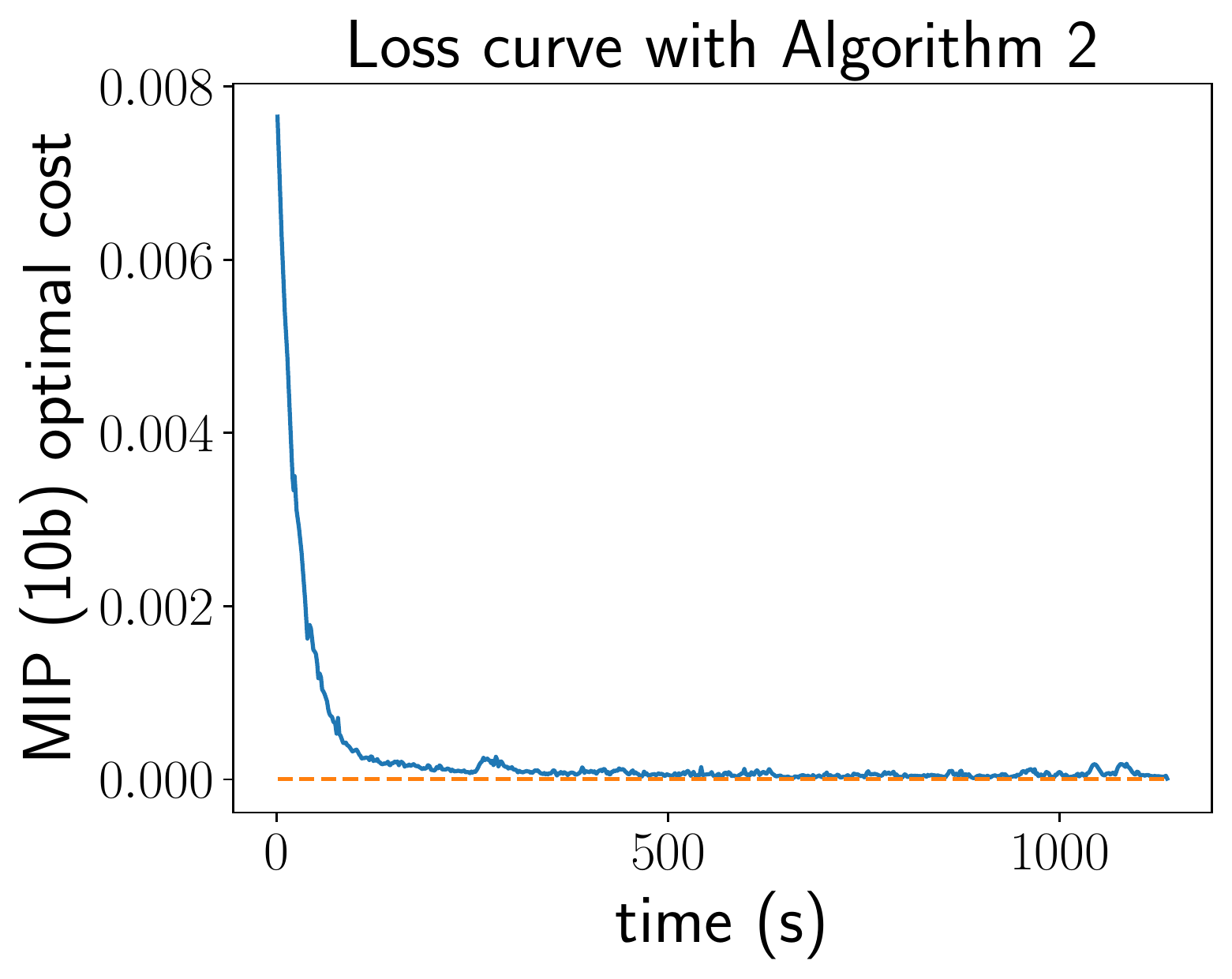}
    \end{subfigure}
    \caption{Loss curves on the 2D quadrotor task for Algorithm \ref{algorithm:train_on_sample} and \ref{algorithm:bilevel}}
    \label{fig:loss_for_algorithms}
    \vspace{-10pt}
\end{figure}
\section{Conclusion and Future work}
In this paper, we demonstrate a method to synthesize a neural-network controller to stabilize a robot system, as well as a neural-network Lyapunov function to prove the resulting stability. We propose an MIP verifier to either certify Lyapunov stability, or generate counter examples that can be used to improve the candidate controller and the Lyapunov function. We present another MIP to compute an inner approximation of the region of attraction. We demonstrate our approach on an inverted pendulum, 2D and 3D quadrotors, and showcase that it can outperform a baseline LQR controller.

Currently, the biggest challenge of our approach is scalability. The speed bottleneck lies in solving MIPs, where the number of binary variables scales linearly with the total number of neurons in the networks. In the worst case, the complexity of solving an MIP scales exponentially with the number of binary variables, when the solver has to check every node of a binary tree. However, in practice, the branch-and-cut process significantly reduces the number of nodes to explore. Recently, with the growing interest from the machine learning community, many approaches were proposed to speed up verifying neural networks through MIP by tightening the formulation \cite{anderson2020strong, tsay2021partition}. We plan to explore these approaches in the future.


Our proposed MIP formulation works for discrete-time dynamical systems. For continuous-time dynamical systems, neural networks have been previously used either to approximate the system dynamics \cite{lutter2018deep}, or to synthesize optimal controllers \cite{farshidian2019deep}. We plan to extend our approach to continuous-time dynamical systems. Moreover, we can readily apply our approach to systems whose dynamics are approximated by piecewise-affine dynamical systems, such as soft robots \cite{tonkens2020soft} and hybrid systems with contact \cite{marcucci2019mixed}, since piecewise-affine dynamic constraints can easily be encoded into MIP.

Many safety-critical missions also require the robot to avoid unsafe regions. We can readily extend our framework to synthesize barrier functions \cite{ames2016control} so that the robot certifiably stays within the safe region.

\section*{Acknowledgments}
Benoit Landry is sponsored by the NASA University Leadership initiative (grant $\#$80NSSC20M0163), and Lujie Yang is sponsored by Amazon Research Award $\#$6943503. This article solely reflects the opinions and conclusions of its authors and not any funding agencies. We would like to thank Vincent Tjeng and Shen Shen for the valuable discussion.

\bibliographystyle{plainnat}
\bibliography{references}

\section{Appendix}
\subsection{Network structures in each example}
\label{subsec:network_structure}
For each task in the result section, we use three fully connected feedforward neural networks to represent forward dynamics, the control policy and the Lyapunov function respectively. All the networks use leaky ReLU activation function. The size of the network is summarized in Table \ref{table:network_size}. Each entry represents the size of the hidden layers. For example $(3, 4)$ represents a neural network with 2 hidden layers, the first hidden layer has 3 neurons, and the second hidden layer has 4 neurons.
\begin{table}
\centering
	\begin{tabular}{|m{1.7cm}|m{1.5cm}|m{1.5cm}|m{1.5cm}|}
		\hline
		& forward dynamics network $\phi_{\text{dyn}}$ & controller network $\phi_\pi$ & Lyapunov network $\phi_V$\\
		\hline
		Inverted pendulum & (5, 5) & (2, 2) & (8, 4, 4)\\
		\hline
		2D quadrotor & (7, 7) & (6, 4)& (10, 10, 4)\\
		\hline
		3D quadrotor & (10, 10) & (16, 8) &(16, 12, 8)\\
		\hline
	\end{tabular}
	\caption{Size of the neural network hidden layers in each task.}
	\label{table:network_size}
\end{table}

\subsection{MIP formulation for $l_1$ norm and clamp function}
\label{subsec:l1_clamp_MIP}
For the $l_1$ norm constraint on $x\in\mathbb{R}^n$
\begin{align}
	y=|x|_1\label{eq:l1}
\end{align}
where $x$ is bounded elementwisely as $l \le x \le u$, we convert this constraint \eqref{eq:l1} to the following mixed-integer linear constraint
\begin{subequations}
\begin{align}
	y = z_1+\hdots+z_n\\
	z_i\ge x_i,\;
	z_i\ge-x_i\\
	z_i \le x_i + 2l_i(\alpha_i-1),\;
	z_i \le 2u_i\alpha_i - x_i\\
	\alpha \in \{0, 1\}
\end{align}
\end{subequations}
where we assume $l< 0, u > 0$ (the case when $l \ge 0$ or $u\le 0$ is trivial).

For a clamp function
\begin{align}
	y = \begin{cases} l,\text{ if } x\le l\\ x,\text{ if } l\le x \le u\\ u , \text{ if } x\ge u\end{cases} \label{eq:clamp}
\end{align}
This clamp function can be rewritten in the following form using ReLU function
\begin{align}
	y =u - ReLU(u - (ReLU(x-l) + l))
\end{align}
As explained in \eqref{eq:leaky_relu_mix_integer} in section \ref{sec:background}, we can convert ReLU function to mixed-integer linear constraints, hence we obtain the MIP formulation of \eqref{eq:clamp}.

\subsection{Necessity of condition \eqref{eq:lyapunov_positivity_l1}}
\label{subsec:l1_necessary}
\begin{lemma}
There exists a piecewise-affine function $V(x):\mathbb{R}^n\rightarrow\mathbb{R}$ satisfying 
\begin{align}
    V(x^*) = 0, V(x) > 0\; \forall x\neq x^*\label{eq:V_positive}
\end{align}
if and only if for a given positive scalar $\epsilon>0$ and a given full column-rank matrix $R$, there exists another piecewise-affine function $\bar{V}(x):\mathbb{R}^n\rightarrow \mathbb{R}$ satisfying
\begin{align}
    \bar{V}(x^*)=0, \bar{V}(x) \ge \epsilon|R(x-x^*)|_1\;\forall x\neq x^*\label{eq:V_ge_l1}
\end{align}
\end{lemma}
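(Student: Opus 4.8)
The plan is to prove the two implications of the biconditional separately, observing that one direction is immediate while the other requires a short construction. Throughout, the only structural facts I need are that $R$ has full column rank (so $|R(x-x^*)|_1>0$ whenever $x\ne x^*$), that $\epsilon>0$, and that the class of piecewise-affine functions is closed under pointwise maximum.

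First I would dispatch the backward direction, i.e. \eqref{eq:V_ge_l1} $\Rightarrow$ \eqref{eq:V_positive}. Suppose a piecewise-affine $\bar V$ with $\bar V(x^*)=0$ and $\bar V(x)\ge \epsilon|R(x-x^*)|_1$ for $x\ne x^*$ exists, and simply take $V=\bar V$. Because $R$ has full column rank, $R(x-x^*)\ne 0$ whenever $x\ne x^*$, so $|R(x-x^*)|_1>0$ there; hence $V(x)\ge \epsilon|R(x-x^*)|_1>0$ for every $x\ne x^*$ while $V(x^*)=0$, which is exactly \eqref{eq:V_positive}. This uses nothing beyond full column rank of $R$ and $\epsilon>0$.

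For the forward direction \eqref{eq:V_positive} $\Rightarrow$ \eqref{eq:V_ge_l1}, which is the substantive one, I would exhibit $\bar V$ explicitly. The cleanest candidate is $\bar V(x)=\max\!\big(V(x),\,\epsilon|R(x-x^*)|_1\big)$, and I would verify three properties: (i) $\bar V$ is piecewise-affine, since the pointwise maximum of two piecewise-affine functions is piecewise-affine (its domain is the common refinement of the two polyhedral subdivisions, further split along the sign of the difference of the two affine candidates, leaving $\bar V$ affine on each cell); (ii) $\bar V(x^*)=\max(V(x^*),0)=\max(0,0)=0$; and (iii) $\bar V(x)\ge \epsilon|R(x-x^*)|_1$ by definition of the maximum. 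These are precisely the requirements \eqref{eq:V_ge_l1}.

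The step deserving the most care is the \emph{global} behavior rather than the local one. If one instead tried to obtain $\bar V$ by merely rescaling, say $\bar V=\lambda V$, one could make $\lambda V$ dominate $\epsilon|R(x-x^*)|_1$ near $x^*$ using the fact that a positive-definite piecewise-affine function grows at least linearly at its minimizer (only finitely many affine pieces meet at $x^*$; by continuity each has vanishing constant term and a linear part $a_i$ with $a_i^{T}(x-x^*)>0$ on its cone, so a compactness argument on the cone gives $V(x)\ge c\|x-x^*\|$ locally). However, far from $x^*$ the function $V$ may stay small, even bounded, along an unbounded direction while $|R(x-x^*)|_1\to\infty$, so no finite scaling dominates globally; the maximum with $\epsilon|R(x-x^*)|_1$ sidesteps this entirely, and if one wants $\bar V$ to retain the shape of $V$ near the equilibrium one can combine the ideas as $\bar V=\max(\lambda V,\epsilon|R(x-x^*)|_1)$ with $\lambda$ large. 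Finally I would remark that since $V(x)=|x-x^*|_1$ (or $\epsilon|R(x-x^*)|_1$ itself) already satisfies each side, both existence statements are unconditionally true, so the real content of the lemma is this constructive equivalence, which justifies adopting the $l_1$-augmented parametrization \eqref{eq:nn_lyapunov} without loss of expressiveness.
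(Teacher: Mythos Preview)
Your argument is correct and takes a genuinely different route from the paper. For the forward direction the paper uses pure scaling: it sets $\bar V=V/\kappa$ with $\kappa=\min_{|d|=1}\min_{t\ne 0}V(x^*+td)/(\epsilon|R(td)|_1)$, asserting that each inner infimum is positive because numerator and denominator are piecewise-affine and positive-definite along the ray, and then invoking compactness of the unit sphere. Your construction $\bar V=\max\!\big(V,\epsilon|R(x-x^*)|_1\big)$ is more elementary and sidesteps any ratio analysis. What the max buys is global validity: as you correctly note, a positive-definite piecewise-affine $V$ on $\mathbb{R}^n$ can be bounded along an unbounded direction (e.g.\ $V(x)=\min(|x|,1)$ on $\mathbb{R}$), so $\inf_{t\ne 0}V(x^*+td)/(\epsilon|R(td)|_1)=0$ there and no finite scale factor suffices. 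This actually exposes a gap in the paper's proof as the lemma is stated on all of $\mathbb{R}^n$; the scaling argument is sound only over a compact region, which is all the paper ever needs for \eqref{eq:lyapunov_positivity_l1}. Your closing remark---that both existence statements are unconditionally true because $\epsilon|R(x-x^*)|_1$ itself witnesses each side---is also well taken and not made explicit in the paper.
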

\begin{proof}
The ``only if" part is trivial, if such $\bar{V}(x)$ exists, then just setting $V(x) = \bar{V}(x)$ and \eqref{eq:V_positive} holds. To prove the ``if" part, assume that $V(x)$ exists, and we will show that there exists a positive scalar, such that by scaling $V(\cdot)$ we get $\bar{V}(\cdot)$. Intuitively this scalar could be found as the smallest ratio between $V(x)$ and $\epsilon|R(x-x^*)|_1$. Formally, given a unit length vector $d\in\mathbb{R}^n$, we define a scalar function $\phi_d(t) = V(x^*+td)$, this scalar function $\phi(\cdot):\mathbb{R}\rightarrow\mathbb{R}$ is also piecewise-affine, as it is just the value of $V(\cdot)$ along the direction $d$. Likewise we define another scalar function $\psi_d(t) = \epsilon|R(td)|_1$ which is just the value of the right-hand side of \eqref{eq:V_ge_l1} along the direction $d$. Since both $\phi_d(t), \psi_d(t)$ are piecewise-affine and positive definite, the minimal ratio $\zeta(d) = \min_{t\neq 0} \phi_d(t)/\psi_d(t)$ is strictly positive. Moreover, consider the value $\kappa=\min_{d^Td=1}\zeta(d)$, since the domain $\{d | d^Td=1\}$ is a compact set, and the minimal of a positive function $\zeta(d)$ on a compact set is still positive, hence $\kappa > 0$. As a result, setting $\bar{V}(\cdot) = V(\cdot)/\kappa$ will satisfy \eqref{eq:V_ge_l1}, because $
    \bar{V}(x) = \bar{V}(x^*+td) = V(x^*+td)/\kappa = \phi_d(t)/\kappa \ge \phi_d(t) / \zeta(d) \ge \phi_d(t) / (\phi_d(t)/\psi_d(t)) = \psi_d(t) = \epsilon|R(x-x^*)|_1
$ where we choose $d = (x-x^*)/|x-x^*|$ and $t=|x-x^*|$.
\end{proof}
\subsection{LQR simulation on 3D quadrotor}
\begin{figure}
    \centering
    \includegraphics[width=0.3\textwidth]{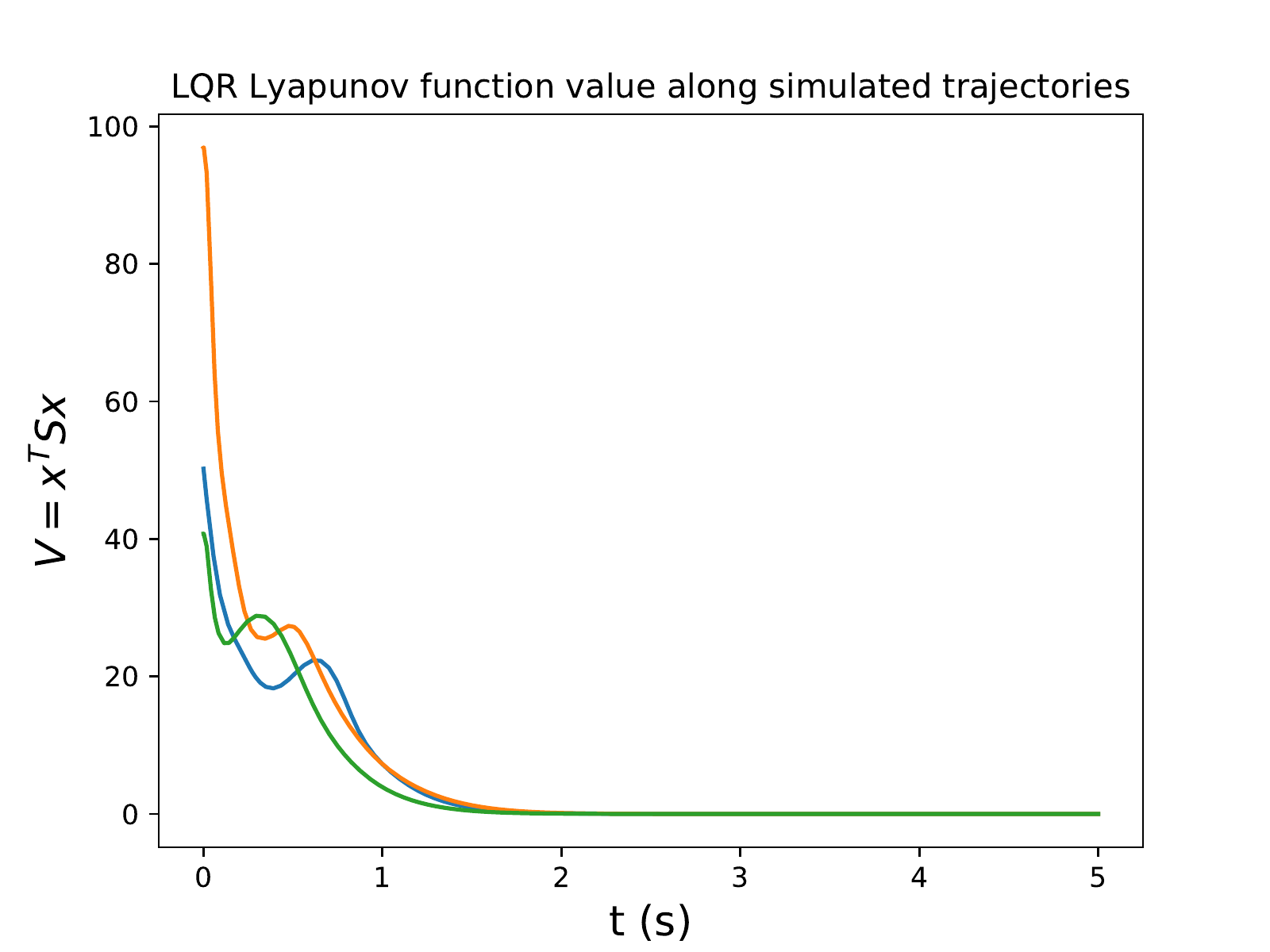}
    \caption{Value of the Lyapunov function $V=x^TSx$ for an LQR controller on a 3D quadrotor.}
    \label{fig:quadrotor3d_lqr_V_traj}
\end{figure}
We simulate the 3D quadrotor with an LQR controller, and plot its Lyapunov function $x^TSx$ (where $S$ is the solution to the Riccati equation) along the simulated trajectories in Fig. \ref{fig:quadrotor3d_lqr_V_traj}. If the dynamics were linear, then this quadratic Lyapunov function would always decrease; on the other hand, with the actual nonlinear dynamics the Lyapunov function (for the linear dynamical system) can increase. For our neural network dynamics, the discrepancy between the approximated dynamics and the actual dynamics will likewise cause the Lyapunov function to increase on the actual dynamical system.

\subsection{Termination tolerance}
In practice, due to solver's numerical tolerance, we declare convergence of Algorithm \ref{algorithm:train_on_sample} and \ref{algorithm:bilevel} when the MIPs \eqref{eq:lyapunov_positivity_mip} \eqref{eq:lyapunov_derivative_mip} has optimal cost in the order of $10^{-6}$.
\end{document}